\def\eqref#1{equation~\ref{#1}}
\def\1{\bm{1}}
\DeclareMathAlphabet{\mathsfit}{\encodingdefault}{\sfdefault}{m}{sl}
\SetMathAlphabet{\mathsfit}{bold}{\encodingdefault}{\sfdefault}{bx}{n}
\definecolor{greyC}{RGB}{180,180,180}
\definecolor{greyL}{RGB}{235,235,235}
\theoremstyle{plain}
\newtheorem{theorem}{Theorem}[section]
\theoremstyle{definition}
\newtheorem{definition}{Definition}[section]
\theoremstyle{remark}
\newtheorem{remark}{Remark}[section]
\newcommand*{\addFileDependency}[1]{
  \typeout{(#1)}
  \@addtofilelist{#1}
  \IfFileExists{#1}{}{\typeout{No file #1.}}
}
\title{FedImpro: Measuring and Improving Client Update in Federated Learning}
\author{\hspace{-1mm}
Zhenheng Tang$^{1,\star}$ \quad Yonggang Zhang$^{1}$ \quad 
Shaohuai Shi$^{2}$ \quad Xinmei Tian$^3$ \\ 
\bf Tongliang Liu$^{4}$ \quad
Bo Han$^{1}$ \quad
Xiaowen Chu$^{5,\dagger}$
\\
	$^1$ Department of Computer Science, Hong Kong Baptist University  \\
    $^2$ Harbin Institute of Technology, Shenzhen  \\
	$^3$ University of Science and Technology of China
    $^4$ Sydney AI Centre, The University of Sydney \\
    $^5$ DSA Thrust, The Hong Kong University of Science and Technology (Guangzhou)  \\
}
\begin{document}

\maketitle
\begingroup\renewcommand\thefootnote{$^{\dagger}$}
\footnotetext{Corresponding author (xwchu@ust.hk). \\ $\star$ This work is partially done during the internship in The Hong Kong University of Science and Technology (Guangzhou).}
\endgroup

\begin{abstract}
Federated Learning (FL) models often experience client drift caused by heterogeneous data, where the distribution of data differs across clients. To address this issue, advanced research primarily focuses on manipulating the existing gradients to achieve more consistent client models. In this paper, we present an alternative perspective on client drift and aim to mitigate it by generating improved local models. First, we analyze the generalization contribution of local training and conclude that this generalization contribution is bounded by the conditional Wasserstein distance between the data distribution of different clients. Then, we propose \texttt{FedImpro}, to construct similar conditional distributions for local training. Specifically, \texttt{FedImpro} decouples the model into high-level and low-level components, and trains the high-level portion on reconstructed feature distributions. This approach enhances the generalization contribution and reduces the dissimilarity of gradients in FL. Experimental results show that \texttt{FedImpro} can help FL defend against data heterogeneity and enhance the generalization performance of the model.

\end{abstract}

\vspace{-9pt}
\section{Introduction}
\vspace{-9pt}

The convergence rate and the generalization performance of FL suffers from heterogeneous data distributions across clients (Non-IID data)~\citep{kairouz2019advances}. The FL community theoretically and empirically found that the ``client drift'' caused by the heterogeneous data is the main reason of such a performance drop~\citep{FedBR,wang2020tackling}. The client drift means the far distance between local models on clients after being trained on private datasets.

Recent convergence analysis~\citep{reddi2020adaptive,woodworth2020minibatch} of FedAvg shows that the degree of client drift is linearly upper bounded by gradient dissimilarity. Therefore, most existing works~\citep{karimireddy2019scaffold,wang2020tackling} focus on gradient correction techniques to accelerate the convergence rate of local training. However, these techniques rely on manipulating gradients and updates to obtain more similar gradients ~\citep{woodworth2020minibatch,wang2020tackling,pmlr-v202-sun23h}. However, the empirical results of these methods show that there still exists a performance gap between FL and centralized training.

In this paper, we provide a novel view to correct gradients and updates. Specifically, we formulate the objective of local training in FL systems as a generalization contribution problem. The generalization contribution means how much local training on one client can improve the generalization performance on other clients' distributions for server models. We evaluate the generalization performance of a local model on other clients' data distributions. Our theoretical analysis shows that the generalization contribution of local training is bounded by the conditional Wasserstein distance between clients' distributions. This implies that even if the marginal distributions on different clients are the same, it is insufficient to achieve a guaranteed generalization performance of local training. Therefore, the key to promoting generalization contribution is to leverage the same or similar conditional distributions for local training.

However, collecting data to construct identical distributions shared across clients is forbidden due to privacy concerns. To avoid privacy leakage, we propose decoupling a deep neural network into a low-level model and a high-level one, i.e., a feature extractor network and a classifier network. Consequently, we can construct a shared identical distribution in the feature space. Namely, on each client, we coarsely\footnote{Considering the computation and communication costs, avoiding privacy leakage, the distribution estimation is approximate. And the parameters will be sent out with noise.} estimate the feature distribution obtained by the low-level network and send the noised estimated distribution to the server model. After aggregating the received distributions, the server broadcasts the aggregated distribution and the server model to clients simultaneously. Theoretically, we show that introducing such a simple decoupling strategy promotes the generalization contribution and alleviates gradient dissimilarity. Our extensive experimental results demonstrate the effectiveness of \texttt{FedImpro}, where we consider the global test accuracy of four datasets under various FL settings following previous works~\citep{chaoyanghe2020fedml,fedprox,wang2020tackling}.

\begin{figure*}[t!]
\vspace{-0.5cm}
    \subfigtopskip=2pt
    \setlength{\belowdisplayskip}{2pt}
    \setlength{\abovedisplayskip}{-5pt}
    \subfigbottomskip=2pt
    \subfigcapskip=1pt
   \centering
    {\includegraphics[width=0.85\linewidth]{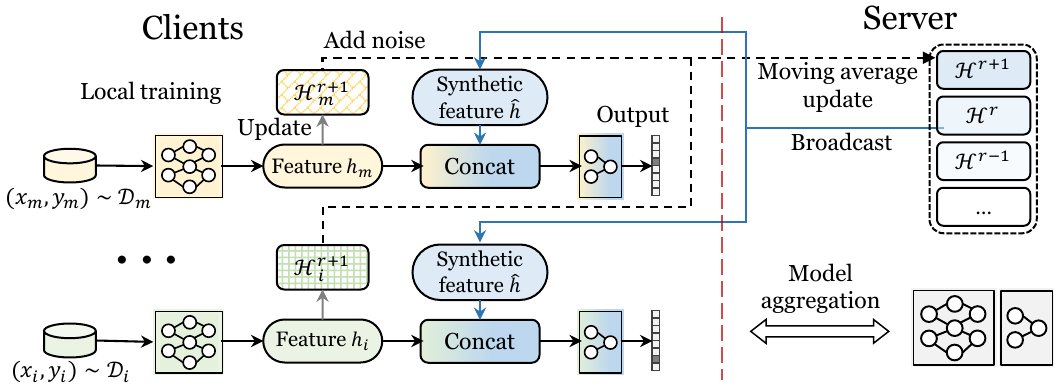}}
    \vspace{-0.3cm}
    \caption{Training process of our framework. On any $m$-th Client, the low-level model uses the raw data $x_m$ as input, and outputs feature $h_m$. The high-level model uses $h_m$ and samples $\hat{h}$ from a shared distribution $\mathcal{H}^r$ as input for forward and backward propagation. Noises will be added to the locally estimated $\mathcal{H}^{r+1}_m$ before aggregation on the Server to update the global $\mathcal{H}^{r+1}$. Model parameters follow the FedAvg aggregation or other FL aggregation algorithms.}
    \label{fig:FrameworkFigure}
\vspace{-0.0cm}
\end{figure*}

Our main contributions include:
(1) We theoretically show that the critical, yet far overlooked generalization contribution of local training is bounded by the conditional Wasserstein distance between clients' distributions (Section~\ref{sec:generalization}).
(2) We are the first to theoretically propose that sharing similar features between clients can improve the generalization contribution from local training, and significantly reduce the gradient dissimilarity, revealing a new perspective of rectifying client drift (Section~\ref{sec:decoupleGD}).
(3) We propose \texttt{FedImpro} to efficiently estimate feature distributions with privacy protection, and train the high-level model on more similar features. Although the distribution estimation is approximate, the generalization contribution of the high-level model is improved and gradient dissimilarity is reduced (Section~\ref{sec:training}). 
(4) We conduct extensive experiments to validate gradient dissimilarity reduction and benefits on generalization performance of \texttt{FedImpro} (Section~\ref{sec:exp}).

\vspace{-8pt}
\section{Related Works}\label{sec:related}
\vspace{-8pt}

We review FL algorithms aiming to address the Non-IID problem and introduce other works related to measuring client contribution and decoupled training. Due to limited space, we leave a more detailed discussion of the literature review in Appendix~\ref{appendix:MoreRelated}.


\vspace{-8pt}
\subsection{Addressing Non-IID problem in FL}
\vspace{-6pt}

\textbf{Model Regularization} focuses on calibrating the local models to restrict them not to be excessively far away from the server model. A number of works like FedProx~\citep{fedprox}, FedDyn~\citep{acar2021federated}, SCAFFOLD~\citep{karimireddy2019scaffold}. VHL~\citep{VHL} utilizes shared noise data to calibrate feature distributions. FedETF~\citep{Li_2023_ICCV} proposed a synthetic and fixed ETF classified to resolve the classifier delemma. SphereFed~\citep{dong2022spherefed} constrains the learned representations to be a unit hypersphere.


\textbf{Reducing Gradient Variance} tries to correct the directions of local updates at clients via other gradient information. This kind of method aims to accelerate and stabilize the convergence, like FedNova~\citep{wang2020tackling}, FedOPT~\citep{reddi2020adaptive} and FedSpeed~\citep{sun2023fedspeed}. Our theorem~\ref{theo:NoHighLevelGV_HatF} provides a new angle to reduce gradient variance.


\textbf{Sharing data} methods proposes sharing the logits or data between clients~\citep{yang2023fedfed}. Cronus~\citep{chang2019cronus} shares the logits to defend the poisoning attack. CCVR~\citep{luo2021no} transmit the logits statistics of data samples to calibrate the last layer of Federated models. CCVR~\citep{luo2021no} also share the parameters of local feature distribution. FedDF~\citep{FedDF} finetunes on the aggregated model via the knowledge distillation on shared publica data. The supriority of FedImpro may result from the difference between these works. Specifically, FedImpro focuses on the training process and reducing gradient dissimilarity of the high-level layers. In contrast, CCVR is a post-hoc method and calibrates merely the classifiers. Furthermore, the FedDF fails to distill knowledge when using the random noise datasets as shown in the results.



\vspace{-6pt}
\subsection{Measuring Contribution from Clients}
\vspace{-6pt}
Clients' willingness to participate in FL training depends on the rewards offered. Hence, it is crucial to evaluate their contributions to the model's performance~\citep{9069185,ng2020multi,GTG_Shapley,CMLwithIncentiveAware}. Some studies~\citep{yuan2022what} propose experimental methods to measure performance gaps from unseen client distributions. Data shapley~\citep{ghorbani2019data,CMLwithIncentiveAware,GTG_Shapley} is introduced to assess the generalization performance improvement resulting from client participation. These approaches evaluate the generalization performance with or without certain clients engaging in the entire FL process. However, we hope to understand the contribution of clients at each communication round. Consequently, our theoretical conclusions guide a modification on feature distributions, improving the generalization performance of the trained model.

\vspace{-6pt}
\subsection{Split Training}
\vspace{-6pt}
Some works propose Split FL (SFL) to utilize split training to accelerate federated learning~\citep{oh2022locfedmix, thapa2020splitfed}. In SFL, the model is split into client-side and server-side parts. At each communication round, the client only downloads the client-side model from the server, and conducts forward propagation, and sends the hidden features to the server to compute the loss and conduct backward propagation. These methods aim to accelerate the training speed of FL on the client side and cannot support local updates. In addition, sending all raw features could introduce a high risk of data leakage. Thus, we omit the comparisons to these methods.


\vspace{-6pt}
\subsection{Privacy Concerns}
\vspace{-6pt}
There are many other works~\citep{luo2021no,li2019fedmd,FedGKT2020,liang2020think,thapa2020splitfed,oh2022locfedmix} that propose to share the hidden features to the server or other clients. Different from them, our decoupling strategy shares the parameters of the estimated feature distributions instead of the raw features, avoiding privacy leakage. We show that FedImpro successfully protect the original data privacy in Appendix~\ref{sec:MIAattacks}.

\vspace{-6pt}
\section{Preliminaries}\label{sec:preliminary}
\vspace{-3pt}

\subsection{Problem Definition}
\vspace{-3pt}

Suppose we have a set of clients $\mathcal{M} = \left\{1,2,\cdots,M  \right\}$ with $M$ being the total number of participating clients. FL aims to make these clients with their own data distribution $\mathcal{D}_m$ cooperatively learn a machine learning model parameterized as $\theta \in \mathbb{R}^d $. Suppose there are $C$ classes in all datasets $\cup_{m\in\mathcal{M}}\mathcal{D}_m$ indexed by $[C]$. A sample in $\mathcal{D}_m$ is denoted by $(x,y)\in \mathcal{X}\times[C]$, where $x$ is a model input in the space $\mathcal{X}$ and $y$ is its corresponding label. The model is denoted by $\rho(\theta;x): \mathcal{X} \to \mathbb{R}^C$. Formally, the global optimization problem of FL can be formulated as~\citep{mcmahan2017communication}:

\begin{equation}\label{eq:F}
    \min_{\theta\in \mathbb{R}^d} F(\theta) := \sum_{m=1}^{M} p_m F_m(\theta)    = \sum_{m=1}^{M} p_m \mathbb{E}_{(x,y)\sim \mathcal{D}_m} f(\theta;x,y),
\end{equation}

where $F_m(\theta)=\mathbb{E}_{(x,y)  \sim \mathcal{D}_m} f(\theta;x,y)$ is the local objective function of client $m$ with $f(\theta;x,y )=CE(\rho(\theta;x),y)$, $CE$ denotes the cross-entropy loss, $p_m > 0 $ and $ \sum_{m=1}^{M} p_m =1  $. Usually, $p_m$ is set as $\frac{n_m}{N}$, where $n_m$ denotes the number of samples on client $m$ and $N=\sum_{m=1}^{M} n_m$.

The clients usually have a low communication bandwidth, causing extremely long training time. To address this issue, the classical FL algorithm FedAvg~\citep{mcmahan2017communication} proposes to utilize local updates. Specifically, at each round $r$, the server sends the global model $\theta^{r-1}$ to a subset of clients $\mathcal{S}^r \subseteq \mathcal{M} $ which are randomly chosen. Then, all selected clients conduct some iterations of updates to obtain new client models $\left\{ \theta_{m}^r \right\} $, which are sent back to the server. Finally, the server averages local models according to the dataset size of clients to obtain a new global model $\theta^{r}$.

\vspace{-6pt}
\subsection{Generalization Quantification}
\vspace{-6pt}

Besides defining the metric for the training procedure, we also introduce a metric for the testing phase. Specifically, we define criteria for measuring the generalization performance for a given deep model. Built upon the margin theory~\citep{margin2,NEURIPS2018_42998cf3}, for a given model $\rho(\theta;\cdot)$ parameterized with $\theta$, we use the worst-case margin \footnote{The similar definition is used in the literature ~\citep{margin3}.} to measure the generalizability on the data distribution $\mathcal{D}$:
\begin{definition}\label{def:margin}
(Worst-case margin.)
Given a distribution $\mathcal{D}$, the worst-case margin of model $\rho(\theta;\cdot)$ is defined as $W_d (\rho(\theta),\mathcal{D})=\mathbb{E}_{(x,y) \sim \mathcal{D}} \inf_{\text{argmax}_{i} \rho(\theta; x')_i  \neq y} d(x',x)$ with $d$ being a specific distance, where the $\text{argmax}_{i} \rho(\theta; x')_i  \neq y$ means the $\rho(\theta; x')$ mis-classifies the $x'$.
\end{definition}
This definition measures the expected largest distance between the data $x$ with label $y$ and the data $x'$ that is mis-classified by the model $\rho$. Thus, smaller margin means higher possibility to mis-classify the data $x$. Thus,  we can leverage the defined worst-case margin to quantify the generalization performance for a given model $\rho$ and a data distribution $\mathcal{D}$ under a specific distance. Moreover, the defined margin is always not less than zero. It is clear that if the margin is equal to zero, the model mis-classifies almost all samples of the given distribution.

\vspace{-3pt}
\section{Decoupled Training Against Data Heterogeneity}\label{sec:method}
\vspace{-3pt}
This section formulates the generalization contribution in FL systems and decoupling gradient dissimilarity. 


\vspace{-6pt}
\subsection{Generalization Contribution}\label{sec:generalization}
\vspace{-3pt}
Although Eq.~\ref{eq:F} quantifies the performance of model $\rho$ with parameter $\theta$, it focuses more on the training distribution. In FL, we cooperatively train machine learning models because of a belief that introducing more clients \texttt{seems} to contribute to the performance of the server models. Given client $m$, we quantify the ``belief'', i.e., the generalization contribution, in FL systems as follows:
\begin{equation} \label{gc}
    \mathbb{E}_{\Delta:\mathbf{L}(\mathcal{D}_m)} 
    W_d(\rho(\theta + \Delta), \mathcal{D} \backslash \mathcal{D}_m),
\end{equation}
where $\Delta$ is a pseudo gradient~\footnote{The pseudo gradient at round $r$ is calculated as: $\Delta^r=\theta^{r-1}_{T} - \theta^{r-1}_0$ with the maximum local iterations $T$.} obtained by applying a learning algorithm $\mathbf{L}(\cdot)$ to a distribution $\mathcal{D}_m$, $W_d$ is the quantification of generalization, and $\mathcal{D} \backslash \mathcal{D}_m$ means the data distribution of all clients except for client $m$. Eq.~\ref{gc} depicts the contribution of client $m$ to generalization ability. Intuitively, we prefer the client where the generalization contribution can be lower bounded.  

\begin{definition}\label{def:CondiWasserstein}
The Conditional Wasserstein distance $C_d(\mathcal{D}, \mathcal{D}')$ between the distribution $\mathcal{D}$ and $\mathcal{D}'$: 
\begin{small}
\begin{equation}
\begin{split}
    C_d(\mathcal{D}, \mathcal{D}') =     \frac{1}{2}
    \mathbb{E}_{(\cdot, y) \sim \mathcal{D}} \inf_{J \in \mathcal{J}(\mathcal{D}|y, \mathcal{D}'|y)}
    \mathbb{E}_{(x, x') \sim J} d(x, x') + 
    \frac{1}{2}
    \mathbb{E}_{(\cdot, y) \sim \mathcal{D'}} \inf_{J \in \mathcal{J}(\mathcal{D}|y, \mathcal{D}'|y)}
    \mathbb{E}_{(x, x') \sim J} d(x, x'). \notag
\end{split}
\end{equation}
\end{small}
\end{definition}


Built upon Definition~\ref{def:margin}, ~\ref{def:CondiWasserstein}, and Eq.~\ref{gc}, we are ready to state the following theorem (proof in Appendix~\ref{sec:proof_generalization}).


\begin{theorem} \label{theo}
With the pseudo gradient $\Delta$ obtained by $\mathbf{L}(\mathcal{D}_m)$, the generalization contribution is lower bounded:
\begin{small}
\begin{equation} \notag
\begin{split}
    \mathbb{E}_{\Delta:\mathbf{L}(\mathcal{D}_m)} 
    W_d(\rho(\theta + \Delta), \mathcal{D} \backslash \mathcal{D}_m)
    \geq & \mathbb{E}_{\Delta:\mathbf{L}(\mathcal{D}_m)} 
    W_d(\rho(\theta + \Delta), \tilde{\mathcal{D}}_m) 
    -  \big| \mathbb{E}_{\Delta:\mathbf{L}(\mathcal{D}_m)} 
    W_d(\rho(\theta + \Delta), \mathcal{D}_m) \\ 
    &  - W_d(\rho(\theta + \Delta), \tilde{\mathcal{D}}_m) \big| - 
    2 C_d(\mathcal{D}_m, \mathcal{D} \backslash \mathcal{D}_m),
\end{split}
\end{equation}
\end{small}
where $\tilde{\mathcal{D}}_m$ represents the dataset sampled from $\mathcal{D}_m$.
\end{theorem}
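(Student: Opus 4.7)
The plan is to prove the bound via a three-term triangle-inequality decomposition followed by separate bounds on each piece. First I would rewrite the target quantity by adding and subtracting two reference terms:
\begin{align*}
\mathbb{E}_{\Delta} W_d(\rho(\theta+\Delta), \mathcal{D}\backslash\mathcal{D}_m)
&= \mathbb{E}_{\Delta} W_d(\rho(\theta+\Delta), \tilde{\mathcal{D}}_m) \\
&\quad + \big[\mathbb{E}_{\Delta} W_d(\rho(\theta+\Delta), \mathcal{D}_m) - \mathbb{E}_{\Delta} W_d(\rho(\theta+\Delta), \tilde{\mathcal{D}}_m)\big] \\
&\quad + \big[\mathbb{E}_{\Delta} W_d(\rho(\theta+\Delta), \mathcal{D}\backslash\mathcal{D}_m) - \mathbb{E}_{\Delta} W_d(\rho(\theta+\Delta), \mathcal{D}_m)\big].
\end{align*}
The middle bracket is at least $-|\cdot|$ in absolute value, which will give the second summand in the theorem. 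The third bracket is where the conditional Wasserstein contribution enters, and I would lower bound it by $-2 C_d(\mathcal{D}_m, \mathcal{D}\backslash\mathcal{D}_m)$.

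The core lemma needed to handle the third bracket is a Lipschitz property of the worst-case margin in the input. Define, for fixed $\rho$ and label $y$, the function $g(x,y) := \inf_{\mathrm{argmax}_i \rho(\theta+\Delta;x')_i \neq y} d(x',x)$. For any two inputs $x_1, x_2$ sharing label $y$, if $x^*$ achieves (or nearly achieves) the infimum for $x_2$, then $g(x_1,y) \le d(x_1,x^*) \le d(x_1,x_2) + g(x_2,y)$, so $|g(x_1,y)-g(x_2,y)| \le d(x_1,x_2)$. Consequently, for any coupling $J$ of the class-conditional distributions $\mathcal{D}_m|y$ and $(\mathcal{D}\backslash\mathcal{D}_m)|y$,
\begin{equation*}
\big|\mathbb{E}_{x\sim\mathcal{D}_m|y} g(x,y) - \mathbb{E}_{x'\sim (\mathcal{D}\backslash\mathcal{D}_m)|y} g(x',y)\big| \le \mathbb{E}_{(x,x')\sim J} d(x,x').
\end{equation*}
Taking the infimum over $J$ gives a per-label Wasserstein bound.

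To translate this into the symmetric $2 C_d$ form of the definition, I would split the difference $W_d(\rho,\mathcal{D}_m)-W_d(\rho,\mathcal{D}\backslash\mathcal{D}_m)$ into two halves: one half in which we average the per-$y$ Wasserstein cost against the label marginal of $\mathcal{D}_m$, the other half against the label marginal of $\mathcal{D}\backslash\mathcal{D}_m$. Each half contributes one of the two $\tfrac{1}{2}$-weighted terms in Definition~\ref{def:CondiWasserstein}; summing and pulling the expectation over $\Delta$ outside yields the factor $2 C_d(\mathcal{D}_m, \mathcal{D}\backslash\mathcal{D}_m)$. For the middle bracket involving $\tilde{\mathcal{D}}_m$, no distributional argument is needed: I would simply use $a \ge -|a|$.

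The main obstacle I anticipate is the symmetric averaging that produces the factor of $2$ together with the $\tfrac{1}{2}+\tfrac{1}{2}$ in the definition of $C_d$. Naively, the Lipschitz-plus-coupling argument only directly bounds the gap when the two label marginals coincide; when they differ, one must split the difference symmetrically and bound each half using the coupling that starts from the appropriate label distribution. Carrying this symmetrization through cleanly, so that the resulting bound matches exactly $2 C_d$ rather than an asymmetric expression, is the delicate step; everything else (the Lipschitzness of $g$, the triangle decomposition, and the empirical-vs-population absolute value) is routine once that lemma is in hand.
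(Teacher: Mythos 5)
Your proposal follows essentially the same route as the paper's proof: the identical add-and-subtract decomposition into the empirical term, the $\mathcal{D}_m$-vs-$\tilde{\mathcal{D}}_m$ gap handled by $a \ge -|a|$, and the $\mathcal{D}_m$-vs-$\mathcal{D}\backslash\mathcal{D}_m$ gap bounded via the triangle inequality on the margin function (your Lipschitz lemma for $g$ is exactly the paper's inline manipulation $\inf d(x,x') \le \inf d(x',x'') + d(x,x'')$ applied under a per-label optimal coupling $J_y$), followed by the symmetrization that yields $2C_d$. The delicate point you flag about mismatched label marginals is precisely where the paper introduces its $\max\{\delta,\gamma\}$ correction term and then restricts to equal label marginals to recover the clean $2C_d$ bound, so your anticipated obstacle matches the paper's resolution.
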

\begin{remark}
Theorem~\ref{theo} implies that three terms are related to the generalization contribution. The first and second terms are intuitive, showing that the generalization contribution of a distribution $\mathcal{D}_m$ is expected to be large and similar to that of a training dataset $\mathcal{\tilde{D}}_{m}$. The last term is also intuitive, which implies that promoting the generalization performance requires constructing similar conditional distributions. Both the Definition~\ref{def:CondiWasserstein} and Theorem~\ref{theo} use distributions conditioned on the label $y$, so we write the feature distribution $\mathcal{H}|y$ as $\mathcal{H}$ for brevity in rest of the paper. 
\end{remark}

Built upon the theoretical analysis, it is straightforward to make all client models trained on similar distributions to obtain higher generalization performance. However, collecting data to construct such a distribution is forbidden in FL due to privacy concerns. To address this challenge, we propose decoupling a deep neural network into a feature extractor network $\varphi_{\theta_{low}} $ parameterized by $\theta_{low} \in \mathbb{R}^{d_{l}}$ and a classifier network parameterized by $\theta_{high} \in \mathbb{R}^{d_{h}}$, and making the classifier network trained on the similar conditional distributions $\mathcal{H}|y$ with less discrepancy, as shown in Figure~\ref{fig:FrameworkFigure}. Here, $d_{l}$ and $d_{h}$ represent the dimensions of parameters $\theta_{low}$ and $\theta_{high}$, respectively.

Specifically, client $m$ can estimate its own hidden feature distribution  as $\mathcal{H}_m$ using the local hidden features $h=\varphi_{\theta_{low}}(x)|_{(x,y) \sim \mathcal{D}_m}$ and send $\mathcal{H}_m$ to the server for the global distribution approximation. Then, the server aggregates the received distributions to obtain the global feature distribution $\mathcal{H}$ and broadcasts it, being similar to the model average in the FedAvg. Finally, classifier networks of all clients thus performs local training on both the local hidden features $h_{(x,y) \sim \mathcal{D}_m}$ and the shared $\mathcal{H}$ during the local training. To protect privacy and reduce computation and communication costs, we propose an approximate but efficient feature estimation methods in Section~\ref{sec:training}. To verify the privacy protection effect, following~\citep{luo2021no}, we reconstruct the raw images from features by model inversion attack~\citep{zhao2021what,zhou2023mcgra} in Figure~\ref{fig:RealSingleIMGs}, ~\ref{fig:ReconSingleIMGs} and ~\ref{fig:ReconMeanIMGs} in Appendix~\ref{sec:MIAattacks}, showing FedImpro successfully protect the original data privacy.

In what follows, we show that such a decoupling strategy can reduce the gradient dissimilarity, besides the promoted generalization performance. To help understand the theory, We also provide an overview of interpreting and connecting our theory to the FedImpro in Appendix~\ref{appendix:moreInterpretTheory}.

\vspace{-3pt}
\subsection{Decoupled Gradient Dissimilarity}\label{sec:decoupleGD}
\vspace{-3pt}

The gradient dissimilarity in FL resulted from heterogeneous data, i.e., the data distribution on client $m$, $\mathcal{D}_m$, is different from that on client $k$, $\mathcal{D}_k$~\citep{karimireddy2019scaffold}. The commonly used quantitative measure of gradient dissimilarity is defined as inter-client gradient variance  (CGV).
\begin{definition}\label{def:CGV}
Inter-client Gradient Variance (CGV): ~\citep{kairouz2019advances,karimireddy2019scaffold,woodworth2020minibatch,pmlr-v119-koloskova20a}
$ \text{CGV}(F,\theta) = \mathbb{E}_{(x,y)\sim \mathcal{D}_m} || \nabla f_m (\theta;x,y) - \nabla F(\theta) ||^2 $.
CGV is usually assumed to be upper bounded~\citep{kairouz2019advances,woodworth2020minibatch,lian2017can}, i.e.,  $ \text{CGV}(F,\theta) = \mathbb{E}_{(x,y)\sim \mathcal{D}_m} || \nabla f_m (\theta;x,y) - \nabla F(\theta) ||^2 \leq \sigma^2 $ with a constant $\sigma$.
\end{definition}

Upper bounded gradient dissimilarity benefits the theoretical convergence rate~\citep{woodworth2020minibatch}. Specifically, lower gradient dissimilarity directly causes higher convergence rate~\citep{karimireddy2019scaffold,fedprox,woodworth2020minibatch}. This means that the decoupling strategy can also benefit the convergence rate if the gradient dissimilarity can be reduced. Now, we are ready to demonstrate how to reduce the gradient dissimilarity CGV with our decoupling strategy. With representing $\nabla f_m (\theta;x,y)$ as $\left\{   \nabla_{\theta_{low}} f_m (\theta;x,y) , \nabla_{\theta_{high}} f_m (\theta;x,y) \right\}$, we propose that the CGV can be divided into two terms of the different parts of $\theta$ (see Appendix~\ref{sec:proof_decoupling} for details):
\begin{small} 
\begin{align}
    \text{CGV}(F,\theta) = & \mathbb{E}_{(x,y)\sim \mathcal{D}_m} || \nabla f_m (\theta;x,y) - \nabla F(\theta) ||^2  \label{eq:CGV} \\ 
    = & \mathbb{E}_{(x,y)\sim \mathcal{D}_m} \left[ ||\nabla_{\theta_{low}} f_m (\theta;x,y) - \nabla_{\theta_{low}} F (\theta) ||^2  + ||\nabla_{\theta_{high}} f_m (\theta;x,y) - \nabla_{\theta_{high}} F (\theta) ||^2 \right]. \notag 
\end{align}
\end{small}

According to the chain rule of the gradients of a deep model, we can derive that the high-level part of gradients that are calculated with the raw data and labels $(x,y)\sim \mathcal{D}_m$ is equal to gradients with the hidden features and labels $(h=\varphi_{\theta_{low}}(x),y)$ (proof in Appendix~\ref{sec:proof_decoupling}):
\begin{equation}\label{eq:gradtransform}
\begin{split}
    & \nabla_{\theta_{high}} f_m (\theta;x,y) = \nabla_{\theta_{high}} f_m (\theta;h,y), \\
    & \nabla_{\theta_{high}} F (\theta) = \sum_{m=1}^{M} p_m \mathbb{E}_{(x,y) \sim \mathcal{D}_m} \nabla_{\theta_{high}} f(\theta;h,y),
\end{split}
\end{equation}
in which $ f_m (\theta;h,y)$ is computed by forwarding the $h=\varphi_{\theta_{low}}(x)$ through the high-level model without the low-level part. In \texttt{FedImpro}, with shared $\mathcal{H}$, client $m$ will sample $\hat{h} \sim \mathcal{H}$ and $h_m=\varphi_{\theta_{low}}(x)|_{(x,y) \sim \mathcal{D}_m}$ to train their classifier network, then the objective function becomes as~\footnote{We reuse $f$ here for brevity, the input of $f$ can be the input $x$ or the hidden feature $h=\varphi_{\theta_{low}}(x)$. }:
\begin{small}
\begin{equation}\label{eq:hatF}
    \min_{\theta\in \mathbb{R}^d} \hat{F}(\theta) := \sum_{m=1}^{M} \hat{p}_m \mathbb{E}_{
    \substack{
        (x,y) \sim \mathcal{D}_m \\\
        \hat{h} \sim \mathcal{H}
    }}
    \hat{f}(\theta;x,\hat{h},y)
    \triangleq \sum_{m=1}^{M} \hat{p}_m \mathbb{E}_{
    \substack{
        (x,y) \sim \mathcal{D}_m \\\
        \hat{h} \sim \mathcal{H}
    }}
    \left[ f(\theta;\varphi_{\theta_{low}}(x),y) + f(\theta; \hat{h},y)    \right].
\end{equation}
\end{small}
Here, $\hat{p}_m=\frac{n_m+\hat{n}_m}{N+\hat{N}}$ with $n_m$ and $\hat{n}_m$ being the sampling size of $(x,y)\sim\mathcal{D}_m$ and $\hat{h}\sim\mathcal{H}$ respectively, and $\hat{N} = \sum_{m=1}^{M} \hat{n}_m$. 
Now, we are ready to state the following theorem of reducing gradient dissimilarity by sampling features from the same distribution (proof in Appendix~\ref{sec:proof_NoHighLevelGV_HatF}).

\begin{theorem} \label{theo:NoHighLevelGV_HatF}
Under the gradient variance measure CGV (Definition~\ref{def:CGV}), with $\hat{n}_m$ satisfying $\frac{\hat{n}_m}{n_m+\hat{n}_m}=\frac{\hat{N}}{N+\hat{N}}$, the objective function $\hat{F}(\theta)$ causes a tighter bounded gradient dissimilarity, i.e. $\text{CGV}(\hat{F}, \theta)
= \mathbb{E}_{(x,y)\sim\mathcal{D}_m} ||\nabla_{\theta_{low}} f_m (\theta;x,y) - \nabla_{\theta_{low}} F (\theta) ||^2 +  \frac{N^2}{(N+\hat{N})^2}  ||\nabla_{\theta_{high}} f_m (\theta;x,y) - \nabla_{\theta_{high}} F (\theta) ||^2 \leq \text{CGV}(F,\theta).$


\end{theorem}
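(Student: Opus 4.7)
The plan is to carry out the variance computation suggested by the decomposition in Eq.~\ref{eq:CGV} and the chain rule identity in Eq.~\ref{eq:gradtransform}, exploiting the fact that $\hat{h}$ enters $\hat{f}$ only through the high-level layers. First I would write the per-step stochastic gradient of $\hat{F}$ on client $m$ as a minibatch average over the $n_m$ real samples and the $\hat{n}_m$ synthetic feature samples. Because $f(\theta;\hat{h},y)$ does not depend on $\theta_{low}$, only the real samples contribute to the $\theta_{low}$ gradient, while the $\theta_{high}$ gradient is a convex combination of a ``real'' part and a ``synthetic'' part with weights $n_m/(n_m+\hat{n}_m)$ and $\hat{n}_m/(n_m+\hat{n}_m)$. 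The hypothesis $\hat{n}_m/(n_m+\hat{n}_m)=\hat{N}/(N+\hat{N})$ makes these weights client-independent and equal to $N/(N+\hat{N})$ and $\hat{N}/(N+\hat{N})$, which is the key simplification and also forces $\hat{p}_m=p_m$.

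Next I would compute the corresponding global gradient $\nabla\hat{F}(\theta)$. Using $\hat{p}_m=p_m$, the low-level coordinate satisfies $\nabla_{\theta_{low}}\hat{F}=\nabla_{\theta_{low}}F$. For the high-level coordinate, since the shared $\mathcal{H}$ is identical across all clients and, by construction in Section~\ref{sec:training}, is the aggregate of the local feature distributions, $\mathbb{E}_{\hat{h}\sim\mathcal{H}}\nabla_{\theta_{high}}f(\theta;\hat{h},y)=\nabla_{\theta_{high}}F(\theta)$, so $\nabla_{\theta_{high}}\hat{F}=\nabla_{\theta_{high}}F$ as well.

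Subtracting, the low-level residual is exactly $\nabla_{\theta_{low}}f_m-\nabla_{\theta_{low}}F$, leaving that block of the variance unchanged. The high-level residual collapses to $\tfrac{N}{N+\hat{N}}(\nabla_{\theta_{high}}f_m-\nabla_{\theta_{high}}F)$ plus a mean-zero synthetic residual $\tfrac{\hat{N}}{N+\hat{N}}(\nabla_{\theta_{high}}f(\theta;\hat{h},y)-\nabla_{\theta_{high}}F)$ that is independent of $(m,x,y)$. Squaring and taking expectation, the cross term vanishes by this independence and mean-zero property, and in the idealized regime where $\hat{h}$ is replaced by its conditional mean (or equivalently averaged over sufficiently many synthetic draws) the synthetic residual itself contributes zero, leaving the stated $\tfrac{N^2}{(N+\hat{N})^2}$ factor on the original high-level block. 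Combining the two orthogonal coordinate blocks via Eq.~\ref{eq:CGV} gives the claimed equality, and the inequality $\text{CGV}(\hat{F},\theta)\le\text{CGV}(F,\theta)$ follows immediately from $N^2/(N+\hat{N})^2\le 1$.

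The main obstacle is the mean-matching step: both the vanishing of the cross term and the collapse of the synthetic residual rely on treating $\mathcal{H}$ as faithfully representing the globally aggregated feature distribution, i.e.\ $\mathbb{E}_{\hat{h}\sim\mathcal{H}}\nabla_{\theta_{high}}f(\theta;\hat{h},y)=\nabla_{\theta_{high}}F(\theta)$. In a rigorous write-up I would elevate this to an explicit assumption, or interpret the statement as comparing idealized gradient dissimilarities, deferring the approximation error from the estimated and noised $\mathcal{H}$ to the separate discussion in Section~\ref{sec:training}; once that assumption is granted, the remainder is routine algebra on block gradients.
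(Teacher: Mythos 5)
Your overall decomposition is the same as the paper's: split $\mathrm{CGV}$ into low- and high-level blocks via Eq.~\ref{eq:CGV}, observe that $\hat{h}$ contributes nothing to the $\theta_{low}$ block, and use the hypothesis $\frac{\hat{n}_m}{n_m+\hat{n}_m}=\frac{\hat{N}}{N+\hat{N}}$ to make the real/synthetic mixing weights client-independent, so the real part of the high-level residual carries the factor $\frac{N}{N+\hat{N}}$. Up to that point you match the paper's Eq.~\ref{eq:highGDhatF} exactly.

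Where you diverge is the treatment of the synthetic term, and this is the one place your argument is weaker than it needs to be. You eliminate the term $\frac{\hat{N}}{N+\hat{N}}\bigl(\nabla_{\theta_{high}}f(\theta;\hat{h},y)-\nabla_{\theta_{high}}F(\theta)\bigr)$ by postulating the mean-matching identity $\mathbb{E}_{\hat{h}\sim\mathcal{H}}\nabla_{\theta_{high}}f(\theta;\hat{h},y)=\nabla_{\theta_{high}}F(\theta)$ (to kill the cross term) and then invoking an ``idealized regime'' to kill the residual variance of the synthetic draw itself. Neither device is needed. The paper expands $\nabla_{\theta_{high}}\hat{F}$ as the $\hat{p}_k$-weighted sum of the per-client mixed gradients evaluated at the \emph{same} realization $\hat{h}$; since $\nabla_{\theta_{high}}f_k(\theta;\hat{h},y)$ does not depend on $k$ (the feature $\hat{h}$ enters above the split, bypassing every client's low-level network) and the synthetic weight $\frac{\hat{n}_k}{n_k+\hat{n}_k}=\frac{\hat{N}}{N+\hat{N}}$ is the same for every $k$, the aggregated synthetic contribution is $\sum_k\frac{\hat{n}_k}{N+\hat{N}}\nabla_{\theta_{high}}f(\theta;\hat{h},y)=\frac{\hat{N}}{N+\hat{N}}\nabla_{\theta_{high}}f(\theta;\hat{h},y)$, which is \emph{identical} to the local synthetic contribution and cancels pointwise inside the norm. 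There is then no cross term to control, no leftover synthetic variance, and no assumption that the estimated $\mathcal{H}$ faithfully reproduces the global feature distribution --- which is fortunate, since $\mathcal{H}$ is only a noised Gaussian approximation and your mean-matching identity would generally fail for it. If you rewrite your global-gradient step so that the synthetic term is compared at the same sample rather than in expectation, the ``main obstacle'' you identify disappears and the rest of your algebra goes through as the paper's does.
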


\begin{remark}
Theorem~\ref{theo:NoHighLevelGV_HatF} shows that the high-level gradient dissimilarity can be reduced as $\frac{N^2}{(N+\hat{N})^2}$ times by sampling the same features between clients. Hence, estimating and sharing feature distributions is the key to promoting the generalization contribution and the reduction of gradient dissimilarity. Note that choosing $\hat{N}=\infty$ can eliminate high-level dissimilarity. However, two reasons make it impractical to sample infinite features $\hat{h}$. First, the distribution is estimated using limited samples, leading to biased estimations. Second, infinite sampling will dramatically increase the calculating cost. We set $\hat{N}=N$ in our experiments.
\end{remark}


\vspace{-6pt}
\subsection{Training Procedure}\label{sec:training}
\vspace{-6pt}

\begin{wrapfigure}{R}{0.5\textwidth}
\begin{minipage}{0.5\textwidth}
\vspace{-0.1cm}
\begin{algorithm}[H]
\vspace{-0.1cm}
	\caption{Framework of FedImpro.}
	\label{algo}
	\small
	\textbf{server input: } initial $\theta^0$,  maximum communication round $ R  $ \\
	\textbf{client $m$'s input: } local iterations $ T$
	\begin{algorithmic}
		\small
		\STATE {\bfseries Initialization:} server distributes the initial model $\theta^0$ to all clients, 
		\colorbox{green!20}{and the initial global $\mathcal{H}^0$}.
        \STATE
        \STATE \textbf{Server\_Executes:}
            \FOR{each round $ r=0,1, \cdots, R$}
        		\STATE server samples a set of clients $\mathcal{S}_r \subseteq \left \{1, ..., M \right \}$.
                \State \small server \textbf{communicates} $\theta_r$ and \colorbox{green!20}{$\mathcal{H}^r$} to all clients $m \in \mathcal{S}$.
        		\FOR{ each client $ m \in \mathcal{S}^{r}$ \textbf{ in parallel do}}
                       \STATE \small $ \small \theta_{m, E-1}^{r+1}$, \colorbox{green!20}{$\mathcal{H}_m^{r+1}$} $\leftarrow \text{\small ClientUpdate}(m, \theta^{r}, \mathcal{H}^r)$. 
        		\ENDFOR
             \STATE $\small \theta^{r+1} \leftarrow \sum_{m=1}^M p_{m} \theta_{m,E-1}^{r+1}$.
             \STATE  \colorbox{green!20}{Update $\mathcal{H}^{r+1}$ using $\left\{ \mathcal{H}_m^{r+1}| m \in \mathcal{S}^{r}\right\}$.}
        	\ENDFOR
       \STATE
        \STATE \textbf{ClientUpdate($\small m, \theta, \mathcal{H}$):}
            \FOR{each local iteration $t$ with $t=0,\cdots, T-1 $}
              \STATE Sample raw data $(x,y) \sim \mathcal{D}_m$ \colorbox{green!20}{and $\hat{h} \sim \mathcal{H}|y$.}
              \STATE \colorbox{green!20}{$\small \theta_{m,t+1} \leftarrow  \theta_{m,t} - \eta_{m,t} \nabla_\theta \hat{f}(\theta;x,\hat{h},y) $ (Eq.~\ref{eq:hatF})}
              \STATE \colorbox{green!20}{Update $\mathcal{H}_m$ using $\hat{h}_m=\varphi_{\theta_{low}}(x)$.}
              \ENDFOR
        \STATE \textbf{Return} $\theta$ and \colorbox{green!20}{$\mathcal{H}_m$} to server.
\end{algorithmic}
\vspace{-0.0cm}
\end{algorithm}
\end{minipage}
\vspace{-0.5cm}
\end{wrapfigure}
As shown in Algorithm~\ref{algo}, \texttt{FedImpro} merely requires two extra steps compared with the vanilla FedAvg method and can be easily plugged into other FL algorithms: a) estimating and broadcasting a global distribution $\mathcal{H}$; b) performing local training with both the local data $(x,y)$ and the hidden features $(\hat{h} \sim \mathcal{H}|y, y)$. 

Moreover, sampling $\hat{h} \sim \mathcal{H}|y$ has two additional advantages as follows. First, directly sharing the raw hidden features may incur privacy concerns. The raw data may be reconstructed by feature inversion methods~\citep{zhao2021what}. One can use different distribution approximation methods to estimate $\left\{h_m| m \in \mathcal{M}\right\}$ to avoid exposing the raw data. Second, the hidden features usually have much higher dimensions than the raw data~\citep{lin2021memoryefficient}. Hence, communicating and saving them between clients and servers may not be practical. We can use different distribution approximation methods to obtain $\mathcal{H}$. Transmitting the parameters of $\mathcal{H}$ can consume less communication resource than hidden features $\left\{h_m| m \in \mathcal{M}\right\}$.


Following previous work~\citep{gaussian}, we exploit the Gaussian distribution to approximate the feature distributions. Although it is an inexact estimation of real features, Gaussian distribution is computation-friendly. And the mean and variance of features used to estimate it widely exist in BatchNorm layers~\citep{ioffe2015batch}, which are also communicated between server and clients. 

On each client $m$, a Gaussian distribution $\mathcal{N}(\mu_{m}, \sigma_{m})$ parameterized with $\mu_{m}$ and $\sigma_{m}$ is used to approximate the feature distribution. On the server-side, $\mathcal{N}(\mu_{g}, \sigma_{g})$ estimate the global feature distributions. As shown in Figure~\ref{fig:FrameworkFigure} and Algorithm~\ref{algo}, during the local training, clients update $\mu_{m}$ and $\sigma_{m}$ using the real feature $h_{m}$ following a moving average strategy which is widely used in the literature~\citep{ioffe2015batch,movingavg}:
\vspace{-0.0cm}
\begin{equation}\label{eq:update_dist_m}
\begin{split}
\mu_{ m}^{(t+1)} & = \beta_{m}\mu_{ m}^{(t)} + (1 - \beta_{ m})\times\text{mean}(h_{ m}),\\
\sigma_{m}^{(t+1)} & = \beta_{m} \sigma_{m}^{(t)} + (1 - \beta_{m})\times\text{variance}({h}_{m}),\end{split}
\end{equation}
where $t$ is the iteration of the local training, $\beta_{ m}$ is the momentum coefficient. To enhance privacy protection, on the server side, $\mu_{g}$ and $\sigma_{g}$  are updated with local parameters plus noise $\epsilon_{i}^r$: 
\vspace{-0.0cm}
\begin{equation}\label{eq:update_dist_g}
\begin{split}
\mu_{g}^{(r+1)} &= \beta_{g}\mu_{g}^{(r)} + (1-\beta_{g})\times\frac{1}{|\mathcal{S}^r|}\sum_{i \in \mathcal{S}^r} (\mu_{ i}^r + \epsilon_{r}^r), \\
\sigma_{g}^{(r+1)} &=\beta_{g}\sigma_{g}^{(r)} + (1-\beta_{g})\times\frac{1}{|\mathcal{S}^r|}\sum_{i \in \mathcal{S}^r} (\sigma_{ i}^r + \epsilon_{i}^r),
\end{split}
\end{equation}
where $\epsilon_{i}^r \sim \mathcal{N}(0, \sigma_\epsilon)$.
Appendix~\ref{sec:MoreExpNoise} and Table~\ref{sec:MoreExpNoise} show the performance of FedImpro with different noise degrees $\sigma_\epsilon$. And Appendix~\ref{sec:MIAattacks} show that the model inversion attacks fail to invert raw data based on shared $\mu_m, \sigma_m, \mu_g$ and $\sigma_g$, illustrating that FedImpro can successfully protect the original data privacy.


\vspace{-6pt}
\section{Experiments}\label{sec:exp}

\begin{table*}[t!]
\centering
\caption{Best test accuracy (\%) of all experimental results. ``Cent.'' means centralized training.  ``Acc.'' means the test accuracy. Each experiment is repeated 3 times with different random seed. The standard deviation of each experiment is shown in its mean value.}
\vspace{0pt}
\footnotesize{
\begin{center}
\resizebox{\linewidth}{!}{
\begin{tabular}{c|c|ccc|cccccc}
\toprule[1.5pt]
\multirow{2}{*}{Dataset} & Cent.   & \multicolumn{3}{c|}{FL Setting} & \multicolumn{6}{c}{FL Test Accuracy}  \\ \cline{3-11}
  & Acc. & $a$  &     $E$      &      $M$   & FedAvg & FedProx  & SCAFFOLD & FedNova & FedDyn & \textbf{FedImpro} \\
\midrule[1.5pt]
\multirow{5}{*}{CIFAR-10} & \multirow{5}{*}{92.53}  & $0.1$ & $1$ & $10$  &     83.65$\pm$2.03     &   83.22$\pm$1.52      &  82.33$\pm$1.13        &    84.97$\pm$0.87  &   84.73$\pm$0.81    &  \textbf{88.45$\pm$0.43}    \\
                      &    & $0.05$ & $1$ & $10$ &  75.36$\pm$1.92      & 77.49$\pm$1.24        &   33.6$\pm$3.87       &     73.49$\pm$1.42 &   77.21$\pm$1.25   &   \textbf{81.75$\pm$1.03}   \\
                      &    & $0.1$ & $5$ & $10$  &   85.69$\pm$0.57     &   85.33$\pm$0.39        &   84.4$\pm$0.41       &   86.92$\pm$0.28   &  86.59$\pm$0.39     &    \textbf{88.10$\pm$0.20}   \\
                      &    & $0.1$ & $1$ & $100$ &   73.42$\pm$1.19      &  68.59$\pm$1.03        &   59.22$\pm$3.11        &   74.94$\pm$0.98     &  75.29$\pm$0.95    &   \textbf{77.56$\pm$1.02}   \\
                      &   &  \multicolumn{3}{c|}{Average}  & 79.53  & 78.66  & 64.89  & 80.08 &  80.96 &    \textbf{83.97}\\
\midrule[1pt]
\multirow{4}{*}{FMNIST} & \multirow{4}{*}{93.7}   & $0.1$ & $1$ & $10$  &   88.67$\pm$0.34     &   88.92$\pm$0.25      &   87.81$\pm$0.36       &    87.97$\pm$0.41     &  89.01$\pm$0.25  &    \textbf{90.83$\pm$0.19}    \\
                     &      & $0.05$ & $1$ & $10$ &   82.73$\pm$0.98     &   83.66$\pm$0.82      &  76.16$\pm$1.29        &   81.89$\pm$0.91  &  83.20$\pm$1.19      &   \textbf{86.42$\pm$0.71}    \\
                     &      & $0.1$ & $5$ & $10$  &   87.6$\pm$0.52     &   88.41$\pm$0.38      &    88.44$\pm$0.29      &     87.66$\pm$0.62 &  88.50$\pm$0.52     &  \textbf{89.87$\pm$0.21}    \\
                      &     & $0.1$ & $1$ & $100$ &  90.12$\pm$0.19       &   90.39$\pm$0.12       &    88.24$\pm$0.31       &   90.40$\pm$0.18   & 90.57$\pm$0.21     &  \textbf{90.98$\pm$0.15}    \\
                      &   &  \multicolumn{3}{c|}{Average}  & 87.28   & 87.85  &	85.16 &	86.98  &  87.82  & \textbf{89.53} \\                       	
\midrule[1pt]
\multirow{4}{*}{SVHN} & \multirow{4}{*}{95.27}     & $0.1$ & $1$ & $10$  &  88.20$\pm$1.21      &  87.04$\pm$0.89        &    83.87$\pm$2.15       &   88.48$\pm$1.31    &  90.82$\pm$1.09     &   \textbf{92.37$\pm$0.82}  \\
                      &     & $0.05$ & $1$ & $10$ &  80.67$\pm$1.92       &   82.39$\pm$1.35       &    82.29$\pm$1.81       &  84.01$\pm$1.48   &    84.12$\pm$1.28   &  \textbf{90.25$\pm$0.69} \\
                      &     & $0.1$ & $5$ & $10$  &   86.32$\pm$1.19      &  86.05$\pm$0.72        &  83.14$\pm$1.51         &   88.10$\pm$0.91  &    89.92$\pm$0.50     &  \textbf{91.58$\pm$0.72}     \\
                      &     & $0.1$ & $1$ & $100$ &  92.42$\pm$0.21       &  92.29$\pm$0.18        &  92.06$\pm$0.20          &   92.44$\pm$0.31   &  92.82$\pm$0.13      &  \textbf{93.42$\pm$0.15}    \\
                      &     & \multicolumn{3}{c|}{Average} &  86.90 & 86.94 & 85.34 & 88.26 &  89.42 &    \textbf{91.91}   \\
\midrule[1pt]
\multirow{4}{*}{CIFAR-100} & \multirow{4}{*}{74.25} & $0.1$ & $1$ & $10$  &  69.38$\pm$1.02       &   69.78$\pm$0.91       &   65.74$\pm$1.52        &  69.52$\pm$0.75   &   69.59$\pm$0.52     &   \textbf{70.28$\pm$0.33}   \\
                     &      & $0.05$ & $1$ & $10$ &  63.80$\pm$1.29       &   64.75$\pm$1.25       &    61.49$\pm$2.16       &   64.57$\pm$1.27  &   64.90$\pm$0.69      &   \textbf{66.60$\pm$0.91}    \\
                     &      & $0.1$ & $5$ & $10$  &  68.39$\pm$1.02       &   68.71$\pm$0.88       &    68.67$\pm$1.20       &  67.99$\pm$1.04    & 68.52$\pm$0.41      &   \textbf{68.79$\pm$0.52}    \\
                     &      & $0.1$ & $1$ & $100$ &  53.22$\pm$1.20       &   54.10$\pm$1.32       &    23.77$\pm$3.21       &    55.40$\pm$0.81 &  54.82$\pm$0.81    & \textbf{56.07$\pm$0.75}     \\
                      &     & \multicolumn{3}{c|}{Average} & 63.70  & 64.34 & 54.92 & 64.37  &  64.46  & \textbf{65.44}  \\ 
\bottomrule[1.5pt] 
\end{tabular}
}
\end{center}
}
\vspace{-0.5cm}
\label{tab:MainResults}
\end{table*}

\vspace{-6pt}
\subsection{Experiment Setup}\label{sec:exp_setup}

\textbf{Federated Datasets and Models.}
We verify FedImpro with four datasets commonly used in the FL community, i.e., CIFAR-10~\citep{cifar100}, FMNIST~\citep{xiao2017fashion}, SVHN~\citep{SVHN}, and CIFAR-100~\citep{cifar100}. We use the Latent Dirichlet Sampling (LDA) partition method to simulate the Non-IID data distribution, which is the most used partition method in FL~\citep{chaoyanghe2020fedml,li2021model,luo2021no}. We train Resnet-18 on CIFAR-10, FMNIST and SVHN, and Resnet-50 on CIFAR-100. We conduct experiments with two different Non-IID degrees, $a=0.1$ and $a=0.05$. We simulate cross-silo FL with $M=10$ and cross-device FL with $M=100$. To simulate the partial participation in each round, the number of sample clients is 5 for $M=10$ and 10 for $M=100$. Some additional experiment results are shown in Appendix~\ref{appendix:AdditionalExp}

\textbf{Baselines and Metrics.}
We choose the classical FL algorithm, FedAvg~\citep{mcmahan2017communication}, and recent effective FL algorithms proposed to address the client drift problem, including FedProx~\citep{fedprox}, SCAFFOLD~\citep{karimireddy2019scaffold}, and FedNova~\citep{wang2020tackling}, as our baselines. The detailed hyper-parameters of all experiments are reported in Appendix~\ref{appendix:DetailedExp}. We use two metrics, the best accuracy and the number of communication rounds to achieve a target accuracy, which is set to the best accuracy of FedAvg. We also measure the weight divergence~\citep{karimireddy2019scaffold}, $ \frac{1}{|\mathcal{S}^r|}\sum_{i \in \mathcal{S}^r}  \| \bar{\theta} - \theta_i\|$, as it reflects the effect on gradient dissimilarity reduction\footnote{To ensure the reproducibility of experimental results, we open source our code at \url{https://github.com/wizard1203/FedImpro}.}.

\vspace{-6pt}
\subsection{Experimental Results}
\vspace{-6pt}

\textbf{Basic FL setting.} As shown in Table~\ref{tab:MainResults}, using the classical FL training setting, i.e. $a=0.1$, $E=5$ and $M=10$, for CIFAR-10, FMNIST and SVHN, FedImpro achieves much higher generalization performance than other methods. We also find that, for CIFAR-100, the performance of FedImpro is similar to FedProx. We conjecture that CIFAR-100 dataset has more classes than other datasets, leading to the results. Thus, a powerful feature estimation approach instead of a simple Gaussian assumption can be a promising direction to enhance the performance.

\textbf{Impacts of Non-IID Degree.} As shown in Table~\ref{tab:MainResults}, for all datasets with high Non-IID degree ($a=0.05$), FedImpro obtains more performance gains than the case of lower Non-IID degree ($a=0.1$). For example, we obtain 92.37\% test accuracy on SVHN with $a=0.1$, higher than the FedNova by 3.89\%. Furthermore, when Non-IID degree increases to $a=0.05$, we obtain 90.25\% test accuracy, higher than FedNova by 6.14\%. And for CIFAR-100, FedImpro shows benefits when $a=0.05$, demonstrating that FedImpro can defend against more severe data heterogeneity.

\textbf{Different Number of Clients.} We also show the results of 100-client FL setting in Table~\ref{tab:MainResults}. FedImpro works well with all datasets, demonstrating excellent scalability with more clients.  

\textbf{Different Local Epochs.} More local training epochs $E$ could reduce the communication rounds, saving communication cost in practical scenarios. In Table~\ref{tab:MainResults}, the results of $E=5$ on CIFAR-10, FMNIST, and SVHN verify that FedImpro works well when increasing local training time.

\begin{wrapfigure}{R}{0.5\textwidth}
\begin{minipage}{0.5\textwidth}
\vspace{-0.3cm}
\begin{figure}[H]
   \centering
    \setlength{\abovedisplayskip}{-5pt}
    \setlength{\abovecaptionskip}{-2pt}
    \subfigbottomskip=2pt
    \subfigcapskip=-1pt
    \subfigure[Test Accuracy]{\includegraphics[width=0.48\textwidth]{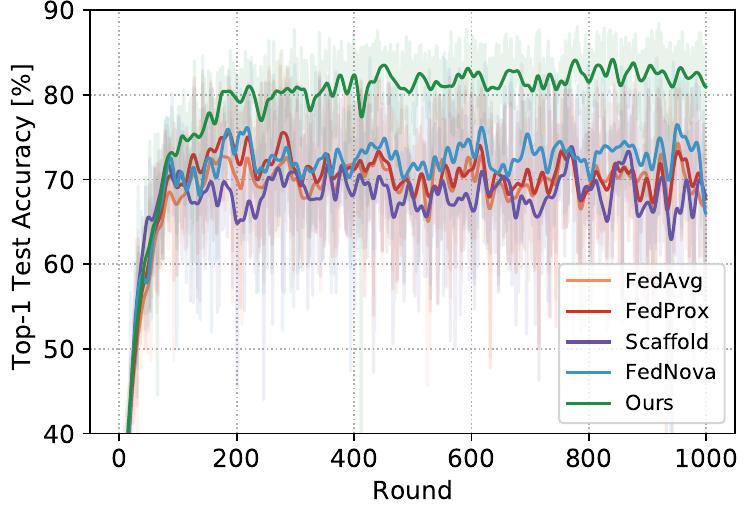}}
    \subfigure[Weight Divergence]{\includegraphics[width=0.48\textwidth]{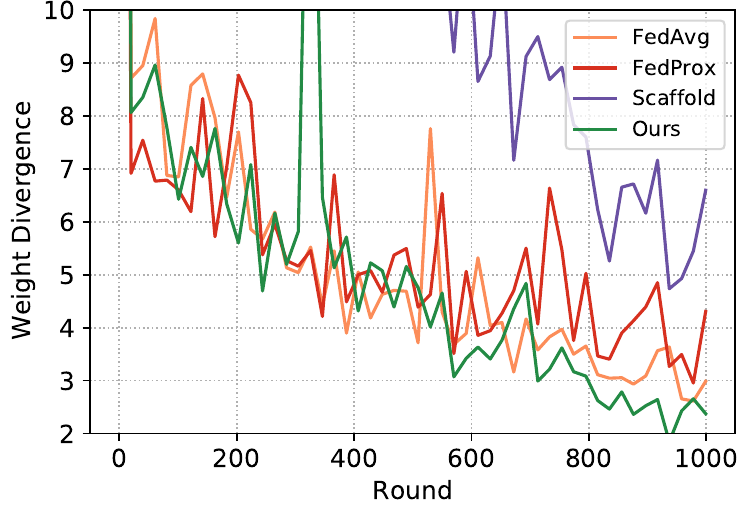}}
    \caption{\small CIFAR10 with $a=0.1$, $E=1$, $M=10$.}
\vspace{-0.1cm}
\label{fig:Convergence-MainResults}
\end{figure}
\end{minipage}
\end{wrapfigure}

\textbf{Weight Divergence.} We show the weight divergence of different methods in Figure~\ref{fig:Convergence-MainResults} (b), where FedNova is excluded due to its significant weight divergence. The divergence is calculated by $ \frac{1}{|\mathcal{S}^r|}\sum_{i \in \mathcal{S}^r}  \| \bar{\theta} - \theta_{i}\|$, which shows the dissimilarity of local client models after local training. At the initial training stage, the weight divergence is similar for different methods. During this stage, the low-level model is still unstable and the feature estimation is not accurate. After about 500 communication rounds, FedImpro shows lower weight divergence than others, indicating that it converges faster than others.

\begin{table*}[t!]
\centering
\caption{Communication Round to attain the target accuracy. ``NaN.'' means not achieving the target.} 
\vspace{-5pt}
\footnotesize{
\begin{center}
\resizebox{\linewidth}{!}{
\begin{tabular}{c|ccc|c|cccccc}
\toprule[1.5pt]
\multirow{2}{*}{Dataset}  & \multicolumn{3}{c|}{FL Setting} & Target  & \multicolumn{6}{c}{Communication Round to attain the target accuracy}  \\ \cline{2-4} \cline{6-11}
  & $a$  &     $E$      &      $M$ & Acc.   & FedAvg & FedProx  & SCAFFOLD & FedNova & FedDyn & \textbf{FedImpro} \\
\midrule[1.5pt]
\multirow{4}{*}{CIFAR-10}  & $0.1$ & $1$ & $10$  & 82.0 &  142 & \textbf{128}  & 863  & 142 & 291 & \textbf{128}\\
                      &   $0.05$ & $1$ & $10$ & 73.0 &  247 & 121 & NaN  & 407 & 195 & \textbf{112} \\
                      &  $0.1$ & $5$ & $10$  & 84.0  &  128 & 128 & 360  & 80  & 109 & \textbf{78}   \\
                      &    $0.1$ & $1$ & $100$ & 73.0 & 957  & NaN  & NaN  &  992 & 820 & \textbf{706}   \\
\midrule[1pt]
\multirow{4}{*}{FMNIST} &  $0.1$ & $1$ & $10$  & 87.0 & 83  &  76 & 275  & 83 & 62 & \textbf{32}   \\
                     &    $0.05$ & $1$ & $10$ & 81.0 & 94  & 94 &  NaN &  395 & 82  &  \textbf{52}  \\
                     &      $0.1$ & $5$ & $10$  & 87.0 & 147 & 31 & 163  &  88  & 29 & \textbf{17}   \\
                      &    $0.1$ & $1$ & $100$ & 90.0 & 375 & 470 & NaN   &  \textbf{317}  & 592 & 441 \\
\midrule[1pt]
\multirow{4}{*}{SVHN} & $0.1$ & $1$ & $10$  & 87.0 & 292 & 247 & NaN  &  251 & 162 & \textbf{50} \\
                      &  $0.05$ & $1$ & $10$ & 80.0 & 578 & 68 & 358 & 242 & 120 & \textbf{50} \\
                      &   $0.1$ & $5$ & $10$  & 86.0 & 251 & 350 & NaN  & NaN & 92 & \textbf{11} \\
                      & $0.1$ & $1$ & $100$ &  92.0 & 471 & 356 & 669  & 356 & 429 & \textbf{346} \\
\midrule[1pt]
\multirow{4}{*}{CIFAR-100} & $0.1$ & $1$ & $10$  & 69.0  & 712 & 857  & NaN & 733 &  682  & \textbf{614}   \\
                     &  $0.05$ & $1$ & $10$ &  61.0 & 386   & 386  & 755  & 366 & 416 &  \textbf{313}   \\
                     &  $0.1$ & $5$ & $10$  &  68.0 & 335  & 307 & \textbf{182} & 282 & 329 & 300 \\
                     & $0.1$ & $1$ & $100$ &  53.0  & 992 & 939  & NaN & 910 & 951  &  \textbf{854} \\
\bottomrule[1.5pt] 
\end{tabular}
}
\end{center}
}
\vspace{-0.5cm}
\label{tab:RoundToACC}
\end{table*}

\begin{wrapfigure}{R}{0.5\textwidth}
\vspace{-0.0cm}
\begin{minipage}{0.5\textwidth}
\vspace{-0.0cm}
\begin{table}[H]
\vspace{-0.3cm}
\centering
\caption{\footnotesize{Splitting at different convolution layers in ResNet-18.}} 
\vspace{-0.0cm}
\footnotesize{
\begin{tabular}{c|cccc}
\toprule[1.5pt]
Layer   & $5$-th & $9$-th & $13$-th & $17$-th \\
\midrule[1.5pt]
Test Acc. (\%) &  87.64 & 88.45 & 87.86 & 84.08 \\
\bottomrule[1.5pt] 
\end{tabular}
}
\vspace{-0.5cm}
\label{tab:Ablation}
\end{table}
\end{minipage}
\end{wrapfigure}

\textbf{Convergence Speed.}  Figure~\ref{fig:Convergence-MainResults} (a) shows that FedImpro can accelerate the convergence of FL.\footnote{Due to high instability of training with severe heterogeneity, the actual test accuracy are semitransparent lines and the smoothed accuracy are opaque lines for better visualization. More results in Appendix~\ref{sec:MoreConvergence}.} And we compare the communication rounds that different algorithms need to attain the target accuracy in Table~\ref{tab:RoundToACC}. Results show FedImpro significantly improves the convergence speed. 


\vspace{-6pt}
\subsection{Ablation Study} \label{sec:AblationStudy}
\vspace{-6pt}

To verify the impacts of the depth of gradient decoupling, we conduct experiments by splitting at different layers, including the $5$-th, $9$-th, $13$-th and $17$-th layers. Table~\ref{tab:Ablation} demonstrates that FedImpro can obtain benefits at low or middle layers. Decoupling at the $17$-th layer will decrease the performance, which is consistent with our conclusion in Sec.~\ref{sec:decoupleGD}. Specifically, decoupling at a very high layer may not be enough to resist gradient dissimilarity, leading to weak data heterogeneity mitigation. Interestingly, according to Theorem~\ref{theo:NoHighLevelGV_HatF}, decoupling at the $5$-th layer should diminish more gradient dissimilarity than the $9$-th and $13$-th layers; but it does not show performance gains. We conjecture that it is due to the difficulty of distribution estimation, since biased estimation leads to poor generalization contribution. As other works~\citep{lin2021memoryefficient} indicate, features at the lower level usually are richer larger than at the higher level. Thus, estimating the lower-level features is much more difficult than the higher-level.

\vspace{-6pt}
\subsection{Discussion} \label{sec:exp_discussion}
\vspace{-6pt}

\begin{wrapfigure}{R}{0.5\textwidth}
\begin{minipage}{0.5\textwidth}
\begin{figure}[H]
\vspace{-0.5cm}
    \setlength{\abovedisplayskip}{-0pt}
    \setlength{\abovecaptionskip}{-0pt}
    \small
    \centering
    \subfigure[Some Choosen layers. ]{\includegraphics[width=0.48\textwidth]{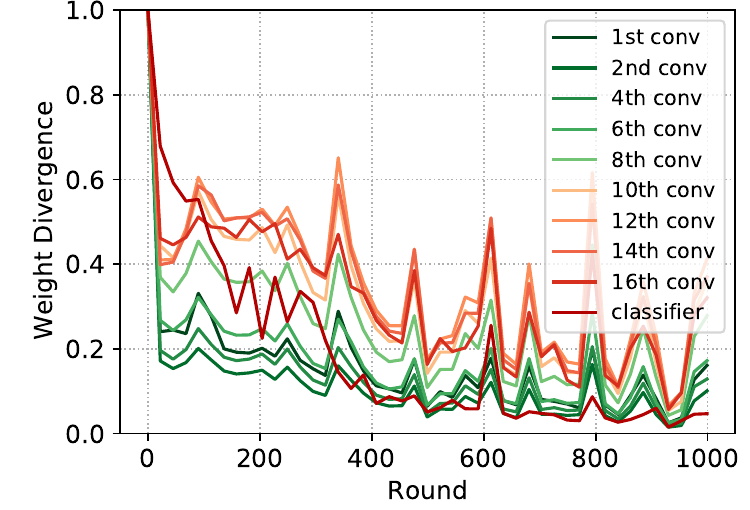}}
    \subfigure[Different Parts. ]{\includegraphics[width=0.48\textwidth]{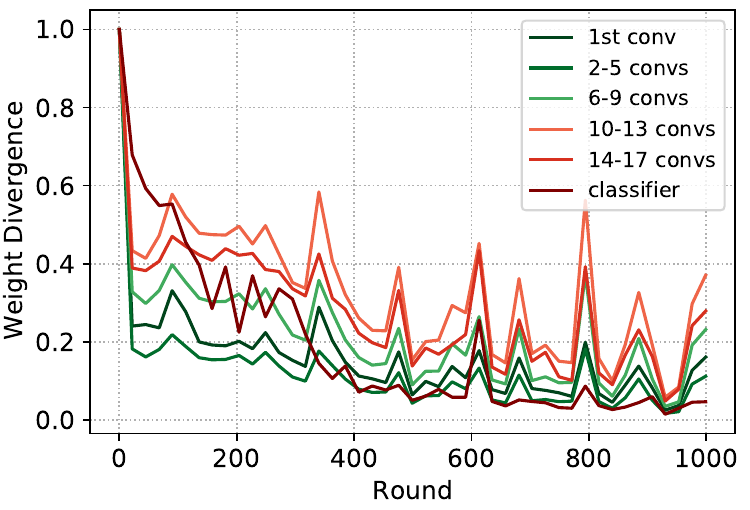}}
\caption{Layer divergence of FedAvg.}
\label{fig:LayerDivergence}
\vspace{-0.5cm}
\end{figure}
\vspace{-0.3cm}
\end{minipage}
\end{wrapfigure}

In this section, we provide some more experimental supports for FedImpro. All experiment results of this section are conducted on CIFAR-10 with ResNet-18, $a=0.1$, $E=1$ and $M=10$. And further experiment result are shown in Appendix~\ref{appendix:AdditionalExp} due to the limited space.

FedImpro only guarantees the reduction of high-level gradient dissimilarity without considering the low-level part. We experimentally find that low-level weight divergence shrinks faster than high-level. Here, we show the layer-wise weight divergence in Figure~\ref{fig:LayerDivergence}. We choose and show the divergence of 10 layers in Figure~\ref{fig:LayerDivergence} (a), and the different stages of ResNet-18 in Figure~\ref{fig:LayerDivergence} (b). As we hope to demonstrate the divergence trend, we normalize each line with its maximum value. The results show that the low-level divergence shrinks faster than the high-level divergence. This means that reducing the high-level gradient dissimilarity is more important than the low-level.

\vspace{-6pt}
\section{Conclusion}
\vspace{-2pt}

In this paper, we correct client drift from a novel perspective of generalization contribution, which is bounded by the conditional Wasserstein distance between clients' distributions. The theoretical conclusion inspires us to propose decoupling neural networks and constructing similar feature distributions, which greatly reduces the gradient dissimilarity by training with a shared feature distribution without privacy breach. We theoretically verify the gradient dissimilarity reduction and empirically validate the benefits of \texttt{FedImpro} on generalization performance. Our work opens a new path of enhancing FL performance from a generalization perspective. Future works may exploit better feature estimators like generative models~\citep{GAN,stylegan} to sample higher-quality features, while reducing the communication and computation costs.

\newpage
\section{ACKNOWLEDGMENT}
This work was partially supported by National Natural Science Foundation of China under Grant No. 62272122, a Hong Kong RIF grant under Grant No. R6021-20, and Hong Kong CRF grants under Grant No. C2004-21G and C7004-22G. 
TL is partially supported by the following Australian Research Council projects: FT220100318, DP220102121, LP220100527, LP220200949, IC190100031.
BH was supported by the NSFC General Program No. 62376235, Guangdong Basic and Applied Basic Research Foundation No. 2022A1515011652, HKBU Faculty Niche Research Areas No. RC-FNRA-IG/22-23/SCI/04, and CCF-Baidu Open Fund. XMT was supported in part by NSFC No. 62222117,  the Fundamental Research Funds for the Central Universities under contract WK3490000005, and KY2100000117. SHS was supported in part by the National Natural Science Foundation of China (NSFC) under Grant No. 62302123 and Guangdong Provincial Key Laboratory of Novel Security Intelligence Technologies under Grant 2022B1212010005.

We thank the area chair and reviewers for their valuable comments.

\section{Ethics Statement}
This paper does not raise any ethics concerns. This study does not involve any human subjects, practices to data set releases, potentially harmful insights, methodologies and applications, potential conflicts of interest and sponsorship, discrimination/bias/fairness concerns, privacy and security issues, legal compliance, and research integrity issues.

\newpage


\bibliography{cite}
\bibliographystyle{iclr2024_conference}

\newpage
\appendix
\onecolumn

\appendix

\section*{Supplementary Material}

\vspace{-5pt}
\section{Broader Impact}
\vspace{-5pt}

\textbf{Measuring Client Contribution During Local Training.}
As discussed in the section~\ref{sec:related}, current works mainly focus on measuring generalization contribution from clients from participating during the whole training process. We consider measuring this contribution during each communication round, which opens a new angle toward the convergence analysis of FL. Future works may fill the generalization gap between FL and centralized training with all datasets.

\textbf{Relationship between Privacy and Performance.}
We analyze the relationship between the sharing features and the raw data in section~\ref{sec:decoupleGD} and Appendix. However, we do not deeply investigate how sharing features or parameters of estimated feature distribution threatens the privacy of private raw data. Sharing features at a lower level may reduce gradient dissimilarity and high generalization performance of FL, yet leading to higher risks of data privacy. Future works may consider figuring out the trade-off between data privacy and the generalization performance with sharing features.

\textbf{Connections of our work to knowledge distillation and domain generalization.} The approximation of features generated based on the client data and low-level models can be seen as a kind of knowledge distillation of other clients. More in-depth analyses of this problem would be an exciting direction, which will be added to our future works. The domain generalization is also an exciting connection to federated learning. It is interesting to connect the measurements of client contribution to the domain generalization.

\vspace{-5pt}
\section{More Discussions}\label{appendix:morediscussion}
\vspace{-5pt}
\textbf{Extra computational overhead.} The experiment hardware and software are described in Section~\ref{appendix:hardware_software}. FedImpro requires little more computing time of FedAvg in simulation. Almost all current experiments are usually simulated in FL community ~\citep{chaoyanghe2020fedml,lai2022fedscale,li2021survey}. We provide a quantitative comparison between FedAvg and FedImpro based on this test setup: Training ResNet-18 on Dirichlet partitioned CIFAR-10 dataset with  $\alpha=0.1$, $M=10$, sampling 5 clients each round, with total communication rounds as 1000 and local epoch as 1.

The simulation time of the FedAvg is around 8h 52m 12s, while FedImpro consumes around 10h 32m 32s. According to the Table~\ref{tab:RoundToACC} in the main text, FedImpro achieves the 82\% acc in similar cpu time with FedAvg. When $E=5$, the simulation time of FedAvg increases to 20h 10m 25s. The convergence speed of FedImpro with $E = 1$ is better than FedAvg with $E= 5$. Note that in this simulation, the communication time is ignored, because the real-world communication does not happen. 

\textbf{Extra communication overhead.} Communiation round is a more important real-world metric compared to cpu time in simulation. In real-world federated learning, the bandwidth with the Internet (around $1\sim10$ MB/s) is very low comparing to the cluster environemnt (around $10\sim1000$ GB/s). Taking ResNet18 with 48 MB as example, each communication with 10 clients would consume $48\sim 480$s, 1000 rounds requires $48000 \sim 480000$s ($15\sim150$ hours). And some other factors like the communication latency and unstability will further increase the communication time. Thus, the large costs mainly exist in communication overhead instead of the computing time~\citep{mcmahan2017communication,kairouz2019advances}. As shown in Table~\ref{tab:RoundToACC} in original text, FedImpro achieve mush faster convergence than other algorithms.

The communication time in real-world can be characterized by the alpha-beta model~\citep{MPICH,shi2019mg}: $T_{comm}=\alpha + \beta M$, where $\alpha$ is the latency per message (or say each communication), $\beta$ the inverse of the communication speed, and $M$ the message size. In FedImpro, the estimated parameters are communicated along with the model parameters. Thus (a) the number of communications is not increased, and the communication time from $\alpha$ will not increase; (b) The $M$ increased as $M_{model} + M_{estimator}$, where $M_{model}$ is the model size and $M_{estimator}$ is the size of estimated parameters. Taking ResNet-20 as an example, the size of estimated parameters is equal to double (mean and variance) feature size (not increased with batch size) as 4KB (ResNet-20), which is greatly less than the model size around 48 MB. Therefore, the additional communication cost is almost zero.

\textbf{Does the low-level model benefits from the $\hat{h}$?} The low-level model will not receive gradients calculated from the $\hat{h}$. Thus, the low-level model is not explicitly updated by the $\hat{h}$. However, from another aspect, training the high-level model benefits from the closer-distance features and the reduced gradient heterogeneity. In FedImpro, the backward-propagated gradients on original features $h$ have lower bias than FedAvg. Therefore, low-level model can implicitly benefit from $\hat{h}$. 

It is non-trivial to conduct the ablation study on training low-level model with $\hat{h}$. The implicit benefits, i.e. better backward gradients from the high-level models, require the high-level models are also better (``better'' means that they are good for model generalization and defending data heterogeneity). Thus, the benefits on high-level and low-level model are coupled. Thus, how to build a direct bridge to benefit training low-level model from $\hat{h}$ is still under exploring.

\textbf{When label distribution is the same on different clients, can we still benefit from FedImpro?} In this case, the client drift may still happen due to the feature distribution is different ~\citep{crosssilononiid,kairouz2019advances}. In this case, we may utilize the Wasserstein distance not conditioned on label in unsupervised domain adaptation ~\citep{lee2019sliced,9157821} to analyse this problem. We will consider this case as the future work.

\textbf{FedAvging on the low-level model or keeping the low-level model locally for each clients?} FedImpro focuses on how to better learning a global model instead of personalized or heterogeneous models. Thus, we conduct FedAvg on the low-level model. However, The core idea of FedImpro, sharing estimated features would be very helpful in personalized split-FL (keep the low-level model locally), with these reasons: (a) our Theorem~\ref{theo} and ~\ref{theo:NoHighLevelGV_HatF} can also be applied into personalized FL with sharing estimated features; (b) The large communication overhead in personalized split-FL is large due to the communication in each forward and backward propagation  (low-level and high-level models are typically deployed in clients and server, respectively~\citep{liang2020think,exploitsharedRep,thapa2020splitfed,chen2021bridging}). Now, sharing estimated features would help reduce this communication a lot by decoupling the forward and backward propagations.

\vspace{-5pt}
\section{Proof}\label{sec:proof}
\vspace{-5pt}
\subsection{Bounded Generalization Contribution}\label{sec:proof_generalization}
Given client $m$, we quantify the generalization contribution, in FL systems as follows:
\begin{equation}
    \mathbb{E}_{\Delta:\mathbf{L}(\mathcal{D}_m)} 
    W_d(\rho(\theta + \Delta), \mathcal{D} \backslash \mathcal{D}_m)),
\end{equation}
where $\Delta$ is a pseudo gradient obtained by applying a learning algorithm $\mathbf{L}(\mathcal{D}_m)$ to a distribution $\mathcal{D}_m$, $W_d$ is the quantification of generalization, and $\mathcal{D} \backslash \mathcal{D}_m)$ means the distribution of all clients except for client $m$. 


\begin{theorem}
With the pseudo gradient $\Delta$ obtained by $\mathbf{L}(\mathcal{D}_m)$, the generalization contribution is lower bounded:
\begin{small}
\begin{equation} \notag
\begin{split}
    \mathbb{E}_{\Delta:\mathbf{L}(\mathcal{D}_m)} 
    W_d(\rho(\theta + \Delta), \mathcal{D} \backslash \mathcal{D}_m))
    \geq
    &\ \mathbb{E}_{\Delta:\mathbf{L}(\mathcal{D}_m)} 
    W_d(\rho(\theta + \Delta), \tilde{\mathcal{D}}_m) 
    - 
    |\mathbb{E}_{\Delta:\mathbf{L}(\mathcal{D}_m)} 
    W_d(\rho(\theta + \Delta), \mathcal{D}_m)) 
    \\&
    - W_d(\rho(\theta + \Delta), \tilde{\mathcal{D}}_m))
    |
    -
    2 C_d(\mathcal{D}_m, \mathcal{D} \backslash \mathcal{D}_m),
\end{split}
\end{equation}
\end{small}where $\tilde{\mathcal{D}}_m$ represents the dataset sampled from $\mathcal{D}_m$.
\end{theorem}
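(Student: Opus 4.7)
The plan is to deduce the bound from two successive triangle inequalities, with the $2 C_d$ term arising from a Lipschitz property of the worst-case margin. Throughout, I write $\rho = \rho(\theta + \Delta)$ for brevity. The first step is to establish a key lemma: for any model $\rho$ and any two labeled distributions $\mathcal{D}_1$, $\mathcal{D}_2$, one has $|W_d(\rho, \mathcal{D}_1) - W_d(\rho, \mathcal{D}_2)| \leq 2 C_d(\mathcal{D}_1, \mathcal{D}_2)$. The starting observation is that, for each fixed label $y$, the per-point margin $m(x; y) := \inf\{ d(x', x) : \argmax_i \rho(x')_i \neq y \}$ is $1$-Lipschitz in $x$ under $d$, which follows from $d(x', x_2) \leq d(x', x_1) + d(x_1, x_2)$ by taking the infimum over $x'$. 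Given this, Kantorovich--Rubinstein duality (equivalently, a direct coupling argument) bounds $|\mathbb{E}_{x \sim \mathcal{D}_1|y} m(x; y) - \mathbb{E}_{x' \sim \mathcal{D}_2|y} m(x'; y)|$ by $\inf_{J \in \mathcal{J}(\mathcal{D}_1|y, \mathcal{D}_2|y)} \mathbb{E}_{(x,x') \sim J} d(x, x')$ for every $y$. Averaging this per-$y$ bound once with respect to the label marginal of $\mathcal{D}_1$ and once with respect to that of $\mathcal{D}_2$, and then taking the mean of the two averages, produces exactly the factor $2$ in front of $C_d$.

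With the lemma in hand, the second step chains two triangle inequalities. Applying the lemma to $\mathcal{D}_1 = \mathcal{D}_m$, $\mathcal{D}_2 = \mathcal{D} \setminus \mathcal{D}_m$ immediately gives $W_d(\rho, \mathcal{D} \setminus \mathcal{D}_m) \geq W_d(\rho, \mathcal{D}_m) - 2 C_d(\mathcal{D}_m, \mathcal{D} \setminus \mathcal{D}_m)$. The elementary inequality $a \geq b - |a - b|$ with $a = W_d(\rho, \mathcal{D}_m)$ and $b = W_d(\rho, \tilde{\mathcal{D}}_m)$ then gives $W_d(\rho, \mathcal{D}_m) \geq W_d(\rho, \tilde{\mathcal{D}}_m) - |W_d(\rho, \mathcal{D}_m) - W_d(\rho, \tilde{\mathcal{D}}_m)|$. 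Combining the two inequalities holds pointwise in $\Delta$; taking $\mathbb{E}_{\Delta : \mathbf{L}(\mathcal{D}_m)}$ of both sides, noting that $C_d(\mathcal{D}_m, \mathcal{D} \setminus \mathcal{D}_m)$ is independent of $\Delta$, and applying $|\mathbb{E} X| \leq \mathbb{E} |X|$ to move the expectation past the absolute value on the generalization-gap term, recovers the stated bound.

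The main obstacle I anticipate is the careful handling of possibly mismatched label marginals in the key lemma: because $\mathcal{D}_1$ and $\mathcal{D}_2$ may differ in both their $y$-marginals and their $x$-conditionals, one cannot couple the joint distributions on $\mathcal{X} \times [C]$ and directly invoke a single Wasserstein bound. The correct move is to fix $y$, couple only the conditionals $\mathcal{D}_i|y$, and then absorb the marginal mismatch by symmetrizing over the two label marginals. This symmetrization is precisely what Definition~\ref{def:CondiWasserstein} encodes, and the explicit $\tfrac{1}{2}$ in that definition matches the factor $2$ produced in the lemma. Once this label-wise decomposition is set up correctly, the remainder of the argument is routine triangle-inequality bookkeeping and linearity of expectation.
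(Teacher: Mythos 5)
Your overall architecture is the same as the paper's: decompose $W_d(\rho,\mathcal{D}\backslash\mathcal{D}_m)$ into an empirical term, a sampling-gap term, and a distribution-shift term, and bound the last one by $2C_d$ via a per-label coupling plus the triangle inequality (your Kantorovich--Rubinstein phrasing and the paper's $\inf_{x'} d(x,x') \le \inf_{x'} d(x',x'') + d(x,x'')$ step are the same argument). However, your key lemma, $|W_d(\rho,\mathcal{D}_1) - W_d(\rho,\mathcal{D}_2)| \le 2C_d(\mathcal{D}_1,\mathcal{D}_2)$ for \emph{arbitrary} pairs of distributions, is false, and the claim that "symmetrizing over the two label marginals" absorbs the marginal mismatch does not hold. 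The quantity $C_d$ in Definition~\ref{def:CondiWasserstein} only ever couples the conditionals $\mathcal{D}_1|y$ and $\mathcal{D}_2|y$; it is completely blind to a discrepancy between the label marginals. Concretely, take $\mathcal{D}_1|y = \mathcal{D}_2|y$ for every $y$ but with different label marginals $p_1 \ne p_2$: then $C_d(\mathcal{D}_1,\mathcal{D}_2)=0$, yet $W_d(\rho,\mathcal{D}_1)-W_d(\rho,\mathcal{D}_2) = \mathbb{E}_{y\sim p_1}[g(y)] - \mathbb{E}_{y\sim p_2}[g(y)]$ (with $g(y)$ the per-class expected margin) can be arbitrarily large. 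Writing $\mathbb{E}_{p_1}[g]-\mathbb{E}_{p_2}[h] = (\mathbb{E}_{p_1}[g]-\mathbb{E}_{p_1}[h]) + (\mathbb{E}_{p_1}[h]-\mathbb{E}_{p_2}[h])$, your Lipschitz/coupling argument controls only the first bracket; the second is a pure marginal-reweighting term. The paper's proof carries this term explicitly as $\max\{\delta(\mathcal{D}_m,\mathcal{D}\backslash\mathcal{D}_m),\gamma(\mathcal{D}_m,\mathcal{D}\backslash\mathcal{D}_m)\}$ and observes that it vanishes exactly when the label distributions coincide, which is the regime in which the clean $2C_d$ bound (and hence the theorem as stated) is actually justified. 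You need either to add that hypothesis or to carry the extra term; the symmetrization in the definition cannot do this work for you.

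A second, smaller issue is the last step. From the pointwise inequality $W_d(\rho,\mathcal{D}_m) \ge W_d(\rho,\tilde{\mathcal{D}}_m) - |W_d(\rho,\mathcal{D}_m)-W_d(\rho,\tilde{\mathcal{D}}_m)|$, taking expectations yields a penalty of $\mathbb{E}_\Delta\,|W_d(\rho,\mathcal{D}_m)-W_d(\rho,\tilde{\mathcal{D}}_m)|$, and since $\mathbb{E}|X| \ge |\mathbb{E}X|$ this is a \emph{weaker} lower bound than the stated one, which penalizes only $|\mathbb{E}_\Delta[W_d(\rho,\mathcal{D}_m)-W_d(\rho,\tilde{\mathcal{D}}_m)]|$; the inequality $|\mathbb{E}X|\le\mathbb{E}|X|$ points in the direction that hurts you here. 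The fix is the paper's ordering of operations: first take the expectation of the exact telescoping identity, then apply $z \ge -|z|$ to the scalar $\mathbb{E}_\Delta[W_d(\rho,\mathcal{D}_m)-W_d(\rho,\tilde{\mathcal{D}}_m)]$, so the absolute value lands outside the expectation from the start. The constant term $2C_d(\mathcal{D}_m,\mathcal{D}\backslash\mathcal{D}_m)$ is unaffected by this reordering, so the rest of your bookkeeping goes through once the lemma is repaired.
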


\begin{proof}
To derive the lower bound, we decompose the conditional quantification of generalization, i.e., $W_d(\rho(\theta + \Delta), \mathcal{D} \backslash \mathcal{D}_m)$:
\begin{equation}
\begin{split}
    W_d(\rho(\theta + \Delta), \mathcal{D} \backslash \mathcal{D}_m) &\ = 
    W_d(\rho(\theta + \Delta), \mathcal{D} \backslash \mathcal{D}_m) - 
    W_d(\rho(\theta + \Delta), \mathcal{D}_m) + 
    W_d(\rho(\theta + \Delta), \mathcal{D}_m) 
    \\ &\ - 
    W_d(\rho(\theta + \Delta), \tilde{\mathcal{D}}_m) +
    W_d(\rho(\theta + \Delta), \tilde{\mathcal{D}}_m),
\end{split}
\end{equation}
where we denote $\rho$ as $\rho(\theta + \Delta)$ for brevity and $\tilde{\mathcal{D}}_m$ stands for the dataset sampled from $\mathcal{D}_m$. Built upon the decomposition, we have:
\begin{equation} \label{append_long}
\begin{split}
    \mathbb{E}_{\Delta:\mathbf{L}(\mathcal{D}_m)} 
    W_d(\rho(\theta + \Delta), \mathcal{D} \backslash \mathcal{D}_m))
    \geq
    &\ \mathbb{E}_{\Delta:\mathbf{L}(\mathcal{D}_m)} 
    W_d(\rho(\theta + \Delta), \tilde{\mathcal{D}}_m) 
    \\ & 
    - 
    |\mathbb{E}_{\Delta:\mathbf{L}(\mathcal{D}_m)} 
    W_d(\rho(\theta + \Delta), \mathcal{D}_m)) - W_d(\rho(\theta + \Delta), \tilde{\mathcal{D}}_m))
    |
    \\&
    -
    | \mathbb{E}_{\Delta:\mathbf{L}(\mathcal{D}_m)} 
    W_d(\rho(\theta + \Delta), \mathcal{D} \backslash \mathcal{D}_m)) -
    W_d(\rho(\theta + \Delta), \mathcal{D}_m))
    | .
\end{split}
\end{equation}
The first term in Eq.~\ref{append_long} represents the empirical generalization performance. The second term in Eq.~\ref{append_long} means that the performance gap between the model trained on sampled dataset and that trained on the distribution, rigorous analysis can be found in~\citep{bounda}. Note that, the first two terms are independent on the distribution $\mathcal{D} \backslash \mathcal{D}_m)$, so the focus of generalization contribution is mainly on the last term, i.e., $|\mathbb{E}_{\Delta:\mathbf{L}(\mathcal{D}_m)}  W_d(\rho(\theta + \Delta), \mathcal{D} \backslash \mathcal{D}_m)) - W_d(\rho(\theta + \Delta), \mathcal{D}_m))|$. 

The proof is relatively straightforward, as long as we derive the upper bound of $W_d(\rho(\theta + \Delta), \mathcal{D}_m)$ and $W_d(\rho(\theta + \Delta), \mathcal{D} \backslash \mathcal{D}_m)$. For $W_d(\rho(\theta + \Delta), \mathcal{D}_m)$, we have:
\begin{equation} \nonumber
\begin{split}
     & W_d(\rho(\theta + \Delta), \mathcal{D}_m) \\
     = & 
     \mathbb{E}_{(\cdot|y) \sim \mathcal{D}_m}
     \mathbb{E}_{x \sim \mathcal{D}_m|y}
     \inf_{\text{argmax}_{i} \rho(\theta; x')_i  \neq y} d(x, x') \\
     = & 
     \mathbb{E}_{(\cdot|y) \sim \mathcal{D}_m}
     \mathbb{E}_{(x, x'') \sim J_y}
     \inf_{\text{argmax}_{i} \rho(\theta; x')_i  \neq y} d(x, x') \\
     \leq &  
     \mathbb{E}_{(\cdot|y) \sim \mathcal{D}_m}
     \mathbb{E}_{(x, x'') \sim J_y}
     \inf_{\text{argmax}_{i} \rho(\theta; x')_i  \neq y} d(x', x'') + d(x,x'') \\
     = &
     \mathbb{E}_{(\cdot|y) \sim \mathcal{D}_m}
     \mathbb{E}_{(x, x'') \sim J_y}
     \inf_{\text{argmax}_{i} \rho(\theta; x')_i  \neq y} d(x', x'') + 
     \mathbb{E}_{(\cdot|y) \sim \mathcal{D}_m}
     \mathbb{E}_{(x, x'') \sim J_y} d(x,x'') \\
     = &
     \mathbb{E}_{(\cdot|y) \sim \mathcal{D}_m}
     \mathbb{E}_{x'' \sim \mathcal{D} \backslash \mathcal{D}_m|y}
     \inf_{\text{argmax}_{i} \rho(\theta; x')_i  \neq y} d(x', x'') + 
     \mathbb{E}_{(\cdot|y) \sim \mathcal{D}_m}
     \mathbb{E}_{(x, x'') \sim J_y} d(x,x''),
\end{split}
\end{equation}
where $J_y$ stands for the optimal transport between the conditional distribution $\mathcal{D}_m|y$ and $\mathcal{D} \backslash \mathcal{D}_m|y$.
Similarly, we have:
\begin{equation} \nonumber
\begin{split}
     W_d(\rho(\theta + \Delta), \mathcal{D} \backslash \mathcal{D}_m) 
     \leq &
     \mathbb{E}_{(\cdot|y) \sim \mathcal{D} \backslash \mathcal{D}_m}
     \mathbb{E}_{x'' \sim \mathcal{D}_m|y}
     \inf_{\text{argmax}_{i} \rho(\theta; x')_i  \neq y} d(x', x'') 
     \\ & 
     + 
     \mathbb{E}_{(\cdot|y) \sim \mathcal{D} \backslash \mathcal{D}_m}
     \mathbb{E}_{(x, x'') \sim J_y} d(x,x'').
\end{split}
\end{equation}
Combining these two inequality, we have:
\begin{equation} \label{app_final}
\begin{split}
     |W_d(\rho(\theta + \Delta), \mathcal{D}_m) -
     W_d(\rho(\theta + \Delta), \mathcal{D} \backslash \mathcal{D}_m)|
     \leq & 
     2 C_d(\mathcal{D}_m, \mathcal{D} \backslash \mathcal{D}_m)) 
     \\ & +
     \max 
     \left \{ 
     \delta(\mathcal{D}_m, \mathcal{D} \backslash \mathcal{D}_m)
     , \gamma(\mathcal{D}_m, \mathcal{D} \backslash \mathcal{D}_m)
     \right \}
     ,
\end{split}
\end{equation}
where 
\begin{small}
\begin{equation}\nonumber
\begin{split}
    \delta(\mathcal{D}_m, \mathcal{D} \backslash \mathcal{D}_m) = & 
     \mathbb{E}_{(\cdot|y) \sim \mathcal{D}_m}
     \mathbb{E}_{x'' \sim \mathcal{D} \backslash \mathcal{D}_m|y}
     \inf_{\text{argmax}_{i} \rho(\theta; x')_i  \neq y} d(x', x'') \\
     & -  
     \mathbb{E}_{(\cdot|y) \sim \mathcal{D} \backslash \mathcal{D}_m}
     \mathbb{E}_{x'' \sim \mathcal{D} \backslash \mathcal{D}_m|y}
     \inf_{\text{argmax}_{i} \rho(\theta; x')_i  \neq y} d(x', x''),
\end{split}
\end{equation}
\end{small}
and 
\begin{small}
\begin{equation}\nonumber
\begin{split}
\gamma(\mathcal{D}_m, \mathcal{D} \backslash \mathcal{D}_m) = &\mathbb{E}_{(\cdot|y) \sim \mathcal{D} \backslash \mathcal{D}_m}
     \mathbb{E}_{x'' \sim \mathcal{D}_m|y}
     \inf_{\text{argmax}_{i} \rho(\theta; x')_i  \neq y} d(x', x'') \\
     & - 
     \mathbb{E}_{(\cdot|y) \sim \mathcal{D}_m}
     \mathbb{E}_{x'' \sim \mathcal{D}_m|y}
     \inf_{\text{argmax}_{i} \rho(\theta; x')_i  \neq y} d(x', x'').
\end{split}
\end{equation}
\end{small}
The upper bound is straightforward. For example, if the label distributions are the same, i.e. $y \sim \mathcal{D} \backslash \mathcal{D}_m$ is equal to $y \sim \mathcal{D}_m$, we have:
\begin{equation} \nonumber
\begin{split}
     |W_d(\rho(\theta + \Delta), \mathcal{D}_m) -
     W_d(\rho(\theta + \Delta), \mathcal{D} \backslash \mathcal{D}_m)|
     \leq 
     2 C_d(\mathcal{D}_m, \mathcal{D} \backslash \mathcal{D}_m)).
\end{split}
\end{equation}
According to Eq.~\ref{app_final}, the last term in Eq.~\ref{append_long} is bounded: 
\begin{equation}
\begin{split}
    & |\mathbb{E}_{\Delta:\mathbf{L}(\mathcal{D}_m)} 
    W_d(\rho(\theta + \Delta), \mathcal{D}_m)) - W_d(\rho(\theta + \Delta), \tilde{\mathcal{D}}_m))
    |
    \\ \leq & 
    \mathbb{E}_{\Delta:\mathbf{L}(\mathcal{D}_m)} 
    |
    W_d(\rho(\theta + \Delta), \mathcal{D}_m)) - W_d(\rho(\theta + \Delta), \tilde{\mathcal{D}}_m))
    |,
\end{split}
\end{equation}
which is further upper bounded by conditional Wasserstein distance when the label distributions are not the same:
\begin{equation}
\begin{split}
    & \mathbb{E}_{\Delta:\mathbf{L}(\mathcal{D}_m)} 
    |
    W_d(\rho(\theta + \Delta), \mathcal{D}_m)) - W_d(\rho(\theta + \Delta), \tilde{\mathcal{D}}_m))
    |
    \\ & \leq
    2 C_d(\mathcal{D}_m, \mathcal{D} \backslash \mathcal{D}_m)) + 
    \max 
     \left \{ 
     \delta(\mathcal{D}_m, \mathcal{D} \backslash \mathcal{D}_m)
     , \gamma(\mathcal{D}_m, \mathcal{D} \backslash \mathcal{D}_m)
     \right \}
    .
\end{split}
\end{equation}
Thus, the label distribution will have additional impact on the bound. If the label distributions are the same, then we have
\begin{small}
\begin{equation}
    |\mathbb{E}_{\Delta:\mathbf{L}(\mathcal{D}_m)} 
    W_d(\rho(\theta + \Delta), \mathcal{D}_m)) - W_d(\rho(\theta + \Delta), \tilde{\mathcal{D}}_m))
    |
    \leq
    2 C_d(\mathcal{D}_m, \mathcal{D} \backslash \mathcal{D}_m))
    ,
\end{equation}
\end{small}
which completes the proof.
\end{proof}


\subsection{Derivation of Decoupling Gradient Variance}\label{sec:proof_decoupling}

\textbf{The derivation of Equation~\ref{eq:CGV}.} Because $\nabla f_m=\left\{\nabla_{\theta_{low}} f_m,\nabla_{\theta_{high}} f_m \right\} \in \mathbb{R}^d $, $\nabla_{\theta_{low}} f_m \in \mathbb{R}^{d_l} $ and $\nabla_{\theta_{high}} f_m \in \mathbb{R}^{d_h} $, we have 
\begin{align}
    & \mathbb{E}_{(x,y)\sim \mathcal{D}_m} || \nabla f_m (\theta;x,y) - \nabla F(\theta) ||^2 \\ \notag 
    = & \sum_{i=1}^{d} (\nabla f_m (\theta;x,y)_{(i)} - \nabla F(\theta)_{(i)} )^2    \\  \notag
    = & \sum_{i=1}^{d_l} (\nabla f_m (\theta;x,y)_{(i)} - \nabla F(\theta)_{(i)} )^2 + \sum_{i=d_l+1}^{d_h} (\nabla f_m (\theta;x,y)_{(i)} - \nabla F(\theta)_{(i)} )^2    \\ \notag
    = & \mathbb{E}_{(x,y)\sim \mathcal{D}_m} \left[ ||\nabla_{\theta_{low}} f_m (\theta;x,y) - \nabla_{\theta_{low}} F (\theta) ||^2 + ||\nabla_{\theta_{high}} f_m (\theta;x,y) - \nabla_{\theta_{high}} F (\theta) ||^2 \right]   \notag
\end{align}

\textbf{The derivation of Equation~\ref{eq:gradtransform}.}
Assuming a multi-layers neural network consists $L$ linear layers, each of which is followed by an activation function. And the loss function is $CE(\cdot)$. The forward function can be formulated as:
\begin{align}
    f(\theta, x) = CE(\tau_{n} (\theta_{n} (\tau_{n-1}( \theta_{n-1}\tau_{n-2}(...\tau_{1}(\theta_{1} x ))   ) ))
\end{align}

Then the gradient on $l$-th weight should be:
\begin{align}
    g_l = \frac{\partial f}{\partial \theta_{l}} &  = \frac{\partial f}{ \partial \tau_{n} (z_{n})}  \frac{\partial \tau_{n}(z_{n}) }{ \partial z_{n} }   \frac{\partial z_{n}}{ \partial  \tau_{n-1}(z_{n-1}) }  \frac{\partial \tau_{n-1}(z_{n-1})}{ \partial z_{n-1} }  \frac{\partial z_{n-1}}{ \partial  \tau_{n-2}(z_{n-2}) }   ...  \frac{\partial \tau_{l+1}(z_{l+1})}{ \partial z_{l+1} }  \frac{\partial z_{l}}{ \partial  \theta_{l} } \\
    &  = \frac{\partial f}{ \partial \tau_{n}(z_{n}) }    \tau_{n}'(z_{n}) \theta_{n} \tau_{n}'(z_{n-1}) \theta_{n-1} ...\tau_{l+1}'(z_{l+1}) \tau_{l}(z_{l})  \\
    & = \frac{\partial f}{ \partial \tau_{n}(z_{n}) }   \left(\prod_{i=l+2}^n \tau_{i}'(z_{i}) \theta_{i} \right) \tau_{l+1}'(z_{l+1}) \tau_{l}(z_{l}),
\end{align}
in which $\theta_{l}$, $\tau_{l}$, $z_{l}$, is the weight, activation function, output of the $l$-th layer, respectively. Thus, we can see that the gradient of $l$-th layer is independent of the data, hidden features, and weights before $l$-th layer if we directly input a $z_l$ to $l$-th layer.

\subsection{Proof of Theorem~\ref{theo:NoHighLevelGV_HatF}}\label{sec:proof_NoHighLevelGV_HatF}

We restate the optimization goals of using the private raw data $(x,y)$ of clients and the shared hidden features $\hat{h} \sim \mathcal{H}|y$ as following:
\begin{equation}
    \min_{\theta\in \mathbb{R}^d} \hat{F}(\theta) := \sum_{m=1}^{M} \hat{p}_m \mathbb{E}_{
    \substack{
        (x,y) \sim \mathcal{D}_m \\\
        \hat{h} \sim \mathcal{H}
    }}
    \hat{f}(\theta;x,\hat{h},y) = \sum_{m=1}^{M} \hat{p}_m \mathbb{E}_{
    \substack{
        (x,y) \sim \mathcal{D}_m \\\
        \hat{h} \sim \mathcal{H}
    }}
    \left[ f(\theta;x,y) + f(\theta;\hat{h},y)    \right],
\end{equation}

\begin{theorem}
Under the gradient variance measure CGV (Definition~\ref{def:CGV}), with $\hat{n}_m$ satisfying $\frac{\hat{n}_m}{n_m+\hat{n}_m}=\frac{\hat{N}}{N+\hat{N}}$, the objective function $\hat{F}(\theta)$ causes a tighter bounded gradient dissimilarity, i.e., the $ \text{CGV}(\hat{F}, \theta)  = \mathbb{E}_{(x,y)\sim \mathcal{D}_m}||\nabla_{\theta_{low}} f_m (\theta;x,y) - \nabla_{\theta_{low}} F (\theta) ||^2 +  \frac{N^2}{(N+\hat{N})^2}  ||\nabla_{\theta_{high}} f_m (\theta;x,y) - \nabla_{\theta_{high}} F (\theta) ||^2 \leq \text{CGV}(F,\theta)$.
\end{theorem}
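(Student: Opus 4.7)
The plan is to treat the CGV as a two-block quantity using the orthogonal parameter decomposition in equation~(\ref{eq:CGV}) and then track what happens to each block once the shared sample $\hat h\sim\mathcal{H}$ is injected. First I would rewrite $\nabla_\theta\hat f(\theta;x,\hat h,y)=\nabla_\theta f(\theta;\varphi_{\theta_{low}}(x),y)+\nabla_\theta f(\theta;\hat h,y)$ and split it along $\theta_{low}$ and $\theta_{high}$. Using the chain-rule identity (\ref{eq:gradtransform}), the high-level part of the two summands matches what the CGV of $F$ sees on the real path, while $\nabla_{\theta_{low}} f(\theta;\hat h,y)=0$ because $\hat h$ bypasses the low-level network and hence does not depend on $\theta_{low}$. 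Under the condition $\hat n_m/(n_m+\hat n_m)=\hat N/(N+\hat N)$ I would first verify that $\hat p_m=p_m$, so that the per-client weights in $\hat F$ reduce to those in $F$; this is a short algebraic check but it is essential so that the global mean of the $\hat h$-contribution is client-independent.

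For the low-level block, the previous paragraph shows that the stochastic gradient of $\hat F$ coincides with the stochastic gradient of $F$, and (since $\nabla_{\theta_{low}}\hat F=\nabla_{\theta_{low}}F$ by the same calculation) the low-level CGV is left unchanged, producing the first term $\mathbb{E}_{(x,y)\sim\mathcal D_m}\|\nabla_{\theta_{low}}f_m-\nabla_{\theta_{low}}F\|^2$ of the claimed formula. For the high-level block, the per-client gradient is effectively a mixture of an $n_m$-weighted contribution from the client-specific features $h=\varphi_{\theta_{low}}(x)$ and an $\hat n_m$-weighted contribution from the globally shared features $\hat h\sim\mathcal{H}$; since $\mathcal{H}$ is identical for every client, the synthetic piece has the same population mean on every client and so it contributes a constant to the per-client high-level gradient. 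I would argue that this constant is precisely $\nabla_{\theta_{high}}\hat F-\alpha\nabla_{\theta_{high}}F$ with $\alpha=N/(N+\hat N)$, so that when one subtracts $\nabla_{\theta_{high}}\hat F$ the shared piece is annihilated and only $\alpha$ times the original heterogeneous term survives. Squaring gives the factor $\alpha^2=N^2/(N+\hat N)^2$ in front of $\mathbb{E}\|\nabla_{\theta_{high}}f_m-\nabla_{\theta_{high}}F\|^2$, which is exactly the second term in the statement.

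Assembling the two blocks yields the claimed closed form for $\text{CGV}(\hat F,\theta)$, and since $N^2/(N+\hat N)^2\le 1$ while the low-level term is unchanged, $\text{CGV}(\hat F,\theta)\le\text{CGV}(F,\theta)$ is immediate, which finishes the proof. The delicate step — the one I expect to be the main obstacle — is the cancellation argument on the high-level block: one must be careful that $\hat h\sim\mathcal{H}|y$ is drawn using $y$ from the client's own marginal, so the synthetic contribution depends on $m$ only through $y$'s distribution. Showing that this residual $m$-dependence either vanishes (if one treats the label marginal as shared in the high-level analysis, as is implicit in the paper's notation $\mathcal{H}|y\mapsto\mathcal{H}$) or is absorbed into the global mean is what makes the factor $\alpha^2$ clean; any sloppiness here would leave a cross term $2\mathbb{E}\langle \text{real piece},\text{synthetic piece}\rangle$ that would spoil the tight scaling.
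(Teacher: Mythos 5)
Your proposal is correct and follows essentially the same route as the paper: decompose $\mathrm{CGV}$ into the low- and high-level blocks via the orthogonal parameter split, observe that $\hat h$ bypasses $\theta_{low}$ so the low-level block is untouched, check that the sampling condition makes $\hat p_m=p_m$, and then cancel the client-independent synthetic contribution against its counterpart in $\nabla_{\theta_{high}}\hat F$ so that only the $\tfrac{N}{N+\hat N}$-scaled real-data term survives, yielding the factor $\tfrac{N^2}{(N+\hat N)^2}$. The subtlety you flag about $\hat h\sim\mathcal H|y$ with $y$ drawn from the client's own label marginal is real, and the paper handles it only implicitly through its notational convention of writing $\mathcal H|y$ as $\mathcal H$; your treatment is otherwise identical to the appendix proof.
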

\begin{proof}
\begin{align}
    \text{CGV}(\hat{F},\theta) = & \mathbb{E}_{
    \substack{
        (x,y) \sim \mathcal{D}_m \\\
        \hat{h} \sim \mathcal{H}
    }} || \nabla \hat{f}_m (\theta;x,\hat{h},y) - \nabla \hat{F}(\theta) ||^2 \notag \\ 
    = & \mathbb{E}_{
        (x,y) \sim \mathcal{D}_m} [ || \nabla_{\theta_{low}} f_m (\theta;x,y) - \nabla_{\theta_{low}} F (\theta) ||^2 ]\notag \\ 
    & +  \mathbb{E}_{
    \substack{
        (x,y) \sim \mathcal{D}_m \\\
        \hat{h} \sim \mathcal{H}
    }} [ ||\nabla_{\theta_{high}} f_m (\theta;x,y) + \nabla_{\theta_{high}} f_m (\theta;\hat{h},y) - \nabla_{\theta_{high}} \bar{F} (\theta) ||^2. \label{eq:CGVhatF1} \\
\end{align}

On $m$-th client, the number of samples of $(x,y)$ is $n_m$ and the $\hat{h}_m$ is $\hat{n}_m$. Then the high-level gradient variance becomes:
\begin{align}\label{eq:highGDhatF}
& \mathbb{E}_{\substack{
        (x,y) \sim \mathcal{D}_m \\\
        \hat{h} \sim \mathcal{H}
    }} [  || \frac{n_m}{n_m + \hat{n}_m} \nabla_{\theta_{high}} f_m (\theta;x,y) +  \frac{\hat{n}_m}{n_m + \hat{n}_m} \nabla_{\theta_{high}} f_m (\theta;\hat{h},y) - \nabla_{\theta_{high}} \bar{F} (\theta) ||^2 \notag \\
= & \mathbb{E}_{\substack{
        (x,y) \sim \mathcal{D}_m \\\
        \hat{h} \sim \mathcal{H}
    }} [ || \frac{n_m}{n_m + \hat{n}_m} \nabla_{\theta_{high}} f_m (\theta;x,y) + \frac{\hat{n}_m}{n_m + \hat{n}_m} \nabla_{\theta_{high}} f_m (\theta;\hat{h},y) \notag \\
& - \sum_{m=1}^{M} \frac{n_{m}+\hat{n}_{m}}{N+\hat{N}} ( \frac{n_m}{n_m + \hat{n}_m} \nabla_{\theta_{high}} f_m (\theta;x,y) + \frac{\hat{n}_m}{n_m + \hat{n}_m} \nabla_{\theta_{high}} f_m (\theta;\hat{h},y) ) ||^2 \notag\\
= & \mathbb{E}_{(x,y) \sim \mathcal{D}_m} || \frac{n_m}{n_m + \hat{n}_m} \nabla_{\theta_{high}} f_m (\theta;x,y) -  \sum_{m=1}^{M}  \frac{n_m}{N+\hat{N}} \nabla_{\theta_{high}} f_m (\theta;x,y )   ||^2 \notag \\
= & \frac{N^2}{(N+\hat{N})^2} \mathbb{E}_{(x,y) \sim \mathcal{D}_m }  || \nabla_{\theta_{high}} f_m (\theta;x,y) - \sum_{m=1}^{M}  \frac{n_m}{N} \nabla_{\theta_{high}} f_m (\theta;x,y)    ||^2.
\end{align}
Combining Equation~\ref{eq:highGDhatF} and ~\ref{eq:CGVhatF1}, we obtain
\begin{equation*}
\begin{split}
    \text{CGV}(\hat{F}, \theta)  = & \mathbb{E}_{(x,y)} ||\nabla_{\theta_{low}} f_m (\theta;x,y) - \nabla_{\theta_{low}} F (\theta) ||^2 \\
    & + \frac{N^2}{(N+\hat{N})^2}  ||\nabla_{\theta_{high}} f_m (\theta;x,y) - \nabla_{\theta_{high}} F (\theta) ||^2,
\end{split}
\end{equation*}
which completes the proof.
\end{proof}

For the convergence analysis, there have been many convergence analyses of FedAvg from a gradient dissimilarity viewpoint~\citep{woodworth2020minibatch,lian2017can,karimireddy2019scaffold}. Specifically, the convergence rate is upper bounded by many factors, among which the gradient dissimilarity plays a crucial role in the bound. In this work, we propose a novel approach inspired by the generalization view to reduce the gradient dissimilarity, we thus provide a tighter bound regarding the convergence rate. This is consistent with our experiments, see Table~\ref{tab:RoundToACC}.

\vspace{-5pt}
\subsection{Interpreting and Connecting Theory with Algorithms}\label{appendix:moreInterpretTheory}
\vspace{-5pt}
We summarize our theory and how it motivates our algorithm as following.
\begin{itemize}
\item  Our work involves two theorems, i.e., Theorem~\ref{theo} and Theorem~\ref{theo:NoHighLevelGV_HatF}, where Theorem~\ref{theo} motivates the proposed method and Theorem~\ref{theo:NoHighLevelGV_HatF} indicates another advantage of the proposed method. 
\item Theorem~\ref{theo} is built upon two definitions, i.e., Definition~\ref{def:margin} and Definition~\ref{def:CondiWasserstein}. Here, Definition~\ref{def:margin} measures the model's generalizability on a data distribution from the margin theory, and Definition~\ref{def:CondiWasserstein} is the conditional Wasserstein distance for two distributions. 
\item Theorem~\ref{theo} shows that promoting the generalization performance requires constructing similar conditional distributions. This motivates our method, i.e., aiming at making all client models trained on similar conditional distributions to obtain higher generalization performance.
\item In our work, inspired by Theorem~\ref{theo}, we regard latent features as the data discussed in Theorem~\ref{theo}. Accordingly, we can construct similar conditional distributions for the latent features and train models using these similar conditional distributions.
\item Theorem~\ref{theo:NoHighLevelGV_HatF} is built upon Definition~\ref{def:CGV}, where gradient dissimilarity in FL is quantitatively measured by Definition~\ref{def:CGV}. 
\item Theorem~\ref{theo:NoHighLevelGV_HatF} shows that our method can reduce gradient dissimilarity to benefit model convergence.
\item For advanced architectures, e.g., a well-trained GFlowNet~\citep{GFlowNet}, reducing the distance in the feature space can induce the distance in the data space. However, the property could be hard to maintain for other deep networks, e.g., ResNet.
\end{itemize}

\vspace{-5pt}
\section{More Related work}\label{appendix:MoreRelated}
\subsection{Addressing Non-IID problem in FL}
\vspace{-5pt}

The convergence and generalization performance of Federated Learning (FL)~\citep{mcmahan2017communication} suffers from the heterogeneous data distribution across all clients~\citep{zhao2018federated, li2019convergence, kairouz2019advances}. There exists a severe divergence between local objective functions of clients, making local models of FL diverge~\citep{fedprox, karimireddy2019scaffold}, which is called~\textbf{client drift}. 

Although researchers have designed many new optimization methods to address this problem, it is still an open problem. The performance of federated learning under severe Non-IID data distribution is far behind the centralized training. The previous methods that address Non-IID data problems can be classified into the following directions.

\textbf{Model Regularization} focuses on calibrating the local models to
restrict them not to be excessively far away from the server model. A number of works~\citep{fedprox,acar2021federated,karimireddy2019scaffold} add a regularizer of local-global model difference. FedProx~\citep{fedprox} adds a penalty of the
L2 distance between local models to the server model. SCAFFOLD~\citep{karimireddy2019scaffold} utilizes the history information to correct the local updates of clients. FedDyn~\citep{acar2021federated} proposes to dynamically update the risk objective to ensure the device optima is asymptotically consistent. FedIR~\citep{hsu2020federated} applies important weight to the client's local objectives to obtain an unbiased estimator of loss. MOON~\citep{li2021model} adds the local-global contrastive loss to learn a similar representation between clients. CCVR~\citep{luo2021no} transmits the statistics of logits and label information of data samples to calibrate the classifier. FedETF~\citep{Li_2023_ICCV} proposed a synthetic and fixed ETF classified to resolve the classifier delemma, which is orthogonal to our method. SphereFed~\citep{dong2022spherefed} proposed constraining learned representations of data points to be a unit hypersphere shared by clients. Specifically, in SphereFed, the classifier is fiexed with weights spanning the unit hypersphere, and calibrated by a mean squared loss. Besides, SphereFed discovers that the non-overlapped feature distributions for the same class lead to weaker consistency of the local learning targets from another perspective. FedImpro alleviate this problem by estimating and sharing similar features.

\textbf{Reducing Gradient Variance} tries to correct the local updates directions of clients via other gradient information. This kind~\citep{wang2020tackling,LDA,reddi2020adaptive} of methods aims to accelerate and stabilize the convergence. FedNova~\citep{wang2020tackling} normalizes the local updates to eliminate the inconsistency between the local and global optimization objective functions. Adjusting data sampling order in local clients is verified to accelerate convergence~\citep{tangdata}. FedAvgM~\citep{LDA} exploits the history updates of the server model to rectify clients' updates. FEDOPT~\citep{reddi2020adaptive} proposes a unified framework of FL. It considers the clients' updates as the gradients in centralized training to generalize the optimization methods in centralized training into FL.  FedAdaGrad and FedAdam are FL versions of AdaGrad and Adam. FedSpeed~\citep{sun2023fedspeed} utilized a prox-correction term on local updates to reduce the biases introduced by the prox-term, as a necessary regularizer to maintain the strong local consistency. FedSpeed further merges the vanilla stochastic gradient with a perturbation computed from an extra gradient ascent step in the neighborhood, which can be seen as reducing gradient heterogeneity from another perspective.

\textbf{Sharing Features.}
Personalized Federated Learning hopes to make clients optimize different personal models to learn knowledge from other clients and adapt their own datasets~\citep{9743558}. The knowledge transfer of personalization is mainly implemented by introducing personalized parameters~\citep{liang2020think,thapa2020splitfed,9708944}, or knowledge distillation~\citep{FedGKT2020, FedDF, li2019fedmd,bistritz2020distributed} on shared local features or extra datasets. 
Due to the preference for optimizing local objective functions, however, personalized federated models do not have a comparable generic performance (evaluated on global test dataset) to normal FL~\citep{chen2021bridging}. Our main goal is to learn a better generic model. Thus, we omit comparisons to personalized FL algorithms.

Except Personalized Federated Learning, some other works propose to share features to improve federated learning. Cronus~\citep{chang2019cronus} proposes sharing the logits to defend the poisoning attack. CCVR~\citep{luo2021no} transmit the logits statistics of data samples to calibrate the last layer of Federated models. CCVR~\citep{luo2021no} also share the parameters of local feature distribution. However, we do not need to share the number of different labels with the server, which protects the privacy of label distribution of clients. Moreover, our method acts as a framework for exploiting the sharing features to reduce gradient dissimilarity. The feature estimator does not need to be the Gaussian distribution of local features. One may utilize other estimators or even features of some extra datasets rather than the private ones.

\textbf{Sharing Data.}
The original cause of client drift is data heterogeneity. Some researchers find that sharing a part of private data can significantly improve the convergence speed and generalization performance~\citep{zhao2018federated}, yet it sacrifices the privacy of clients' data. 

Thus, to both reduce data heterogeneity and protect data privacy, a series of works~\citep{5670948, NIPSAsimpleDPDataRelease, CompressiveLearninWithDP, johnson2018towards, DataSynthesisDPMRF,li2023deepinception,yang2023fedfed} add noise on data to implement sharing data with privacy guarantee to some degree. Some other works focus on sharing a part of synthetic data\citep{sharing1,long2021g,FLviaSyntheticData,hao2021towards} or data statistics~\citep{shin2020xor,yoon2021fedmix} to help reduce data heterogeneity rather than raw data. 

FedDF~\citep{FedDF} utilizes other data and conducts knowledge distillation based on these data to transfer knowledge of models between server and clients. The core idea of FedDF is to conduct finetuning on the aggregated model via the knowledge distillation with the new shared data.

\textbf{Communication compressed FL}
Communication compression methods aim to reduce the communication size in each round. Typical methods include \textbf{sparsifying} most of unimportant weights~\citep{DoCoFL,bibikar2022federated, qiu2022zerofl, GossipFL,tang2020survey,tang2023fusionai}, \textbf{quantizing} model updates with fewer bits than the conventional 32 bits~\citep{reisizadeh2020fedpaq,gupta2023quantization}, and \textbf{low-rank decomposition} to communicate smaller matrices~\citep{nguyen2024towards, kone2016}. Despite reducing communication costs, these methods are not as cost-effective as one-shot FL and our methods, due to the necessary of enormous communication rounds for convergence.

\vspace{-5pt}
\subsection{Measuring Contribution from Clients}
\vspace{-5pt}
\textbf{Generalization Contribution.}
Clients are only willing to participate a FL training when given enough rewards. Thus, it is important to measure their contributions to the model performance~\citep{9069185,ng2020multi,GTG_Shapley,CMLwithIncentiveAware}. 

There have been some works~\citep{yuan2022what,9069185,ng2020multi,GTG_Shapley,CMLwithIncentiveAware} proposed to measure the generalization contribution from clients in FL. Some works~\citep{yuan2022what} propose to experimentally measure the performance gaps from the unseen client distributions. Data shapley~\citep{ghorbani2019data,9069185,luo2023ai,liu2023survey} is proposed to measure the generalization performance gain of client participation. ~\citep{GTG_Shapley} improves the calculation efficiency of Data Shapley. And there is some other work that proposes to measure the contribution by learning-based methods~\citep{8963610}. Our proposed questions are different from these works. Precisely, these works measure the generalization performance gap with or without some clients that never join the collaborative training of clients. However, we hope to understand the contribution of clients at each communication round. Based on this understanding, we can further improve the FL training and obtain a better generalization performance.

It has been empirically verified that a large number of selected clients introduces new challenges to optimization and generalization of FL~\citep{charles2021large}, although some theoretical works show the benefits from it~\citep{yang2020achieving}. This encourages us to understand what happens during the local training and aggregation. Causality is also a potential way to explore the client contribution~\citep{zhang2021adversarial}.

\textbf{Client Selection.}
Several works~\citep{cho2020client,goetz2019active,ribero2020communication,Oort} propose new algorithms to strategically select clients rather than randomly. However, these methods only consider the hardware resources or local generalization ability. How local training affects the global generalization ability has not been explored.

\vspace{-5pt}
\subsection{Split Training}
\vspace{-5pt}
To efficiently train neural networks, split training instead of end-to-end training is proposed to break the forward, backward, or model updating dependency between layers of neural networks. 

To break the backward dependency on subsequent layers, hidden features could be forwarded to another loss function to obtain the \textbf{Local Error Signals}~\citep{marquez2018deep,nokland2019training,lowe2019putting,wang2020revisiting,zhuang2021accumulated}. How to design a suitable local error still remains as an open problem. Some works propose to utilize extra modules to synthesize gradients~\citep{jaderberg2017decoupled}, so that the backward and updates of different layers can be decoupled. \textbf{Features Replay}~\citep{huo2018training} is to reload the history features of the preceding layers into the next layers. By reusing the history features, the calculation on different layers could be asynchronously conducted.

Some works propose Split FL (SFL) to utilize split training to accelerate federated learning~\citep{oh2022locfedmix, thapa2020splitfed}. In SFL, the model is split into client-side and server-side parts. At each communication round, the client only downloads the client-side model from the server, conducts forward propagation, and sends the hidden features to the server for computing loss and backward propagation. This method aims to accelerate FL's training speed on the client side and cannot support local updates. In addition, sending all raw features could introduce a high data privacy risk. Thus, we omit the comparisons to these methods.

We demystify different FL algorithms related to the shared features in Table~\ref{tab:demystifySharingData}.

\begin{table}[t]
\centering
\caption{Demystifying different FL algorithms related to the sharing data and features.} 
\vspace{-2pt}
\centering
\small{
\begin{center}
\resizebox{\linewidth}{!}{
\begin{tabular}{c|ccc}
\toprule[1.5pt]
                  &  Shared Thing &  Low-level Model & Objective     \\
\midrule[1pt]
~\citep{CompressiveLearninWithDP, DataSynthesisDPMRF}  &  Raw Data With Noise    &  Shared   & Others \\
~\citep{long2021g,hao2021towards}   &  Params. of Data Generator    &  Shared   & Global Model Performance  \\
~\citep{yoon2021fedmix,shin2020xor} & STAT. of raw Data   &  Shared &  Global Model Performance \\
~\citep{luo2021no}  &  STAT. of Logis, Label Distribution  &  Shared &     Global Model Performance     \\
~\citep{ chang2019cronus}  &  Hidden Features &  Shared &    Defend Poisoning Attack      \\
~\citep{li2019fedmd,bistritz2020distributed}  &  logits &  Private &     Personalized FL     \\
~\citep{FedGKT2020,liang2020think} &  Hidden Features &  Private &     Personalized FL     \\
~\citep{thapa2020splitfed,oh2022locfedmix} &  Hidden Features &  Shared &     Accelerate Training     \\
\textbf{FedImpro} & Params. of Estimated Feat. Distribution  &  Shared  &  Global Model Performance \\
\bottomrule[1.5pt] 
\end{tabular}
}
\end{center}
}
\begin{tablenotes}
	\item Note: ``STAT.'' means statistic information, like mean or standard deviation, ``Feat.'' means hidden features, ``Params.'' means parameters.
\end{tablenotes}
\vspace{-0.1cm}
\label{tab:demystifySharingData}
\end{table}

\vspace{-5pt}
\section{Details of Experiment Configuration}\label{appendix:DetailedExp}

\subsection{Hardware and Software Configuration}\label{appendix:hardware_software}
\vspace{-5pt}
\textbf{Hardware and Library.} We conduct experiments using GPU GTX-2080 Ti, CPU Intel(R) Xeon(R) Gold 5115 CPU @ 2.40GHz. The operating system is Ubuntu 16.04.6 LTS. The Pytorch version is 1.8.1. The Cuda version is 10.2. 

\textbf{Framework.} The experiment framework is built upon a popular FL framework FedML~\citep{chaoyanghe2020fedml,tang2023fedml}. And we conduct the standalone simulation to simulate FL. Specifically, the computation of client training is conducted sequentially, using only one GPU, which is a mainstream experimental design of FL~\citep{chaoyanghe2020fedml,sun2023fedspeed,reddi2020adaptive,luo2021no,liang2020think,crosssilononiid,caldas2018leaf}, due to it is friendly to simulate large number of clients. Local models are offloaded to CPU when it is not being simulated. Thus, the real-world computation time would be much less than the reported computation time. Besides, the communication does not happen, which would be the main bottleneck in real-world FL.



\vspace{-5pt}
\subsection{Hyper-parameters}
\vspace{-5pt}
The learning rate configuration has been listed in Table~\ref{tab:LearningRate}. We report the best results and their learning rates (grid search in $\left \{0.0001, 0.001, 0.01, 0.1, 0.3 \right \}$).

And for all experiments, we use SGD as optimizer for all experiments, with batch size of 128 and weight decay of 0.0001. Note that we set momentum as 0 for baselines, as we find the momentum of 0.9 may harm the convergence and performance of FedAvg in severe Non-IID situations. We also report the best test accuracy of baselines that are trained with momentum of 0.9 in Table~\ref{tab:BaselineMomentum}. The client-side momentum in FL training does not always commit better convergence because the momentum introduces larger local updates, increasing the client drift, which is also observed in a recent benchmark~\citep{he2021fedcv}. And the server-side momentum~\citep{LDA} may improve the performance. The compared algorithms including FedAvg, FedProx, SCAFFOLD, FedNova do not use the server-side momentum. For the fair comparisons we did not use the server-side momentum for all algorithms.

For $K=10$ and $K=100$, the maximum communication round is 1000, For $K=10$ and $E=5$, the maximum communication round is 400 (due to the $E=5$ increase the calculation cost). The number of clients selected for calculation is 5 per round for $K=10$, and 10 for $K=100$.

\begin{table}[h!]
\centering
\caption{Learning rate of all experiments.} 
\vspace{1pt}
\small{
\begin{tabular}{c|ccc|ccccc}
\toprule[1.5pt]
\multirow{2}{*}{Dataset}   & \multicolumn{3}{c|}{FL Setting}         & \multirow{2}{*}{FedAvg} & \multirow{2}{*}{FedProx} & \multirow{2}{*}{SCAFFOLD} & \multirow{2}{*}{FedNova} & \multirow{2}{*}{\textbf{FedImpro}} \\ \cline{2-4}
   & $a$  &     $E$      &      $K$   &  &  &  &  &  \\
\midrule[1.5pt]
\multirow{4}{*}{CIFAR-10}  & $0.1$ & $1$ & $10$  &   0.1  &  0.1  &    0.1 &  0.1   &   0.1  \\
                           & $0.05$ & $1$ & $10$ &   0.1  &  0.1  &    0.01 &  0.1   &   0.1  \\
                           & $0.1$ & $5$ & $10$ &    0.1  &  0.1  &    0.1 &  0.1   &   0.1  \\
                           & $0.1$ & $1$ & $100$ &   0.1  &  0.1  &    0.01 &  0.1   &   0.1  \\
\midrule[1pt]
\multirow{4}{*}{FMNIST}    & $0.1$ & $1$ & $10$  &   0.1  &  0.1  &    0.1 &  0.1   &   0.1  \\
                           & $0.05$ & $1$ & $10$ &   0.1  &  0.1  &    0.001 &  0.1   &   0.1  \\
                           & $0.1$ & $5$ & $10$  &   0.1  &  0.1  &    0.1 &  0.1   &   0.1  \\
                           & $0.1$ & $1$ & $100$ &   0.1  &  0.1  &    0.01 &  0.1   &   0.1  \\
\midrule[1pt]
\multirow{4}{*}{SVHN}      & $0.1$ & $1$ & $10$ &    0.1  &  0.1  &    0.01 &  0.1   &   0.1  \\
                           & $0.05$ & $1$ & $10$ &   0.1  &  0.1  &    0.01 &  0.1   &   0.1  \\
                           & $0.1$ & $5$ & $10$  &   0.1  &  0.01  &    0.01 &  0.01   &   0.1  \\
                           & $0.1$ & $1$ & $100$ &   0.1  &  0.1  &    0.001 &  0.1   &   0.1  \\
\midrule[1pt]
\multirow{4}{*}{CIFAR-100} & $0.1$ & $1$ & $10$  &   0.1  &  0.1  &    0.1 &  0.1   &   0.1  \\
                           & $0.05$ & $1$ & $10$ &   0.1  &  0.1  &    0.1 &  0.1   &   0.1  \\
                           & $0.1$ & $5$ & $10$ &     0.1  &  0.1  &    0.1 &  0.1   &   0.1  \\
                           & $0.1$ & $1$ & $100$ &   0.1  &  0.1  &    0.1 &  0.1   &   0.1  \\
\bottomrule[1.5pt] 
\end{tabular}
}
\label{tab:LearningRate}
\end{table}

\begin{table}[t!]
\centering
\caption{Baselines with Momentum-SGD.} 
\vspace{1pt}
\small{
\begin{tabular}{c|ccc|cccc}
\toprule[1.5pt]
\multirow{2}{*}{Dataset}   & \multicolumn{3}{c|}{FL Setting}         & \multirow{2}{*}{FedAvg} & \multirow{2}{*}{FedProx} & \multirow{2}{*}{SCAFFOLD} & \multirow{2}{*}{FedNova} \\ \cline{2-4}
   & $a$  & $E$ & $K$ &  &  &  &   \\
\midrule[1.5pt]
\multirow{4}{*}{CIFAR-10}  & $0.1$ & $1$ & $10$  & 79.98 &  83.56 &  83.58   & 81.35 \\
                           & $0.05$ & $1$ & $10$ & 69.02 & 78.66  & 38.55    &  64.78   \\
                           & $0.1$ & $5$ & $10$ &  84.79 & 82,18  &  86.20   & 86.09    \\
                           & $0.1$ & $1$ & $100$ & 49.61 & 49.97  & 52.24    &  46.53    \\
\midrule[1pt]
\multirow{4}{*}{FMNIST}    & $0.1$ & $1$ & $10$  & 86.81  & 87.12  & 86.21 & 86.99    \\
                           & $0.05$ & $1$ & $10$ & 78.57 & 81.96  & 76.08    & 79.06  \\
                           & $0.1$ & $5$ & $10$  & 87.45 & 86.07  &  87.10   & 87.53   \\
                           & $0.1$ & $1$ & $100$ & 90.11 & 90.71 & 85.99  & 87.09  \\
\midrule[1pt]
\multirow{4}{*}{SVHN}      & $0.1$ & $1$ & $10$ & 88.56 & 86.51  & 80.61  & 89.12  \\
                           & $0.05$ & $1$ & $10$ & 82.67 & 78.57  & 74.23    & 82.22   \\
                           & $0.1$ & $5$ & $10$  & 87.92 & 78.43  & 81.07 &   88.17  \\
                           & $0.1$ & $1$ & $100$ & 89.44 & 89.51  & 89.55 &  82.08\\
\midrule[1pt]
\multirow{4}{*}{CIFAR-100} & $0.1$ & $1$ & $10$  & 67.95 & 65.29  &   67.14  &  68.26  \\
                           & $0.05$ & $1$ & $10$ & 62.07 & 61.52 & 59.04 & 60.35 \\
                           & $0.1$ & $5$ & $10$ & 69.81 & 62.62  &  70.68   & 70.05     \\
                           & $0.1$ & $1$ & $100$ & 48.33 & 48.14   & 51.63    & 48.12    \\
\bottomrule[1.5pt] 
\end{tabular}
}
\label{tab:BaselineMomentum}
\end{table}

\subsection{More Convergence Figures}\label{sec:MoreConvergence}
Except the Figure~\ref{fig:Convergence-MainResults} in the main paper, we provide more convergence results as Figures~\ref{fig:Convergence-cifar10},~\ref{fig:Convergence-fmnist}, ~\ref{fig:Convergence-SVHN} and ~\ref{fig:Convergence-CIFAR100}. These results show that our method can accelerate FL training and obtain higher generalization performance.

\begin{figure*}[h!]
    \subfigbottomskip=-1pt
    \subfigcapskip=1pt
  \centering
\!\!\!\!\!\!\!\!  
     \subfigure[\small $a=0.1$, $K=10$, $E=1$ ]{\includegraphics[width=0.24\textwidth]{Convergence/convergence-resnet18_v2-cifar10-0.1-10-1.pdf}}
     \subfigure[$a=0.1$, $K=10$, $E=5$ ]{\includegraphics[width=0.24\textwidth]{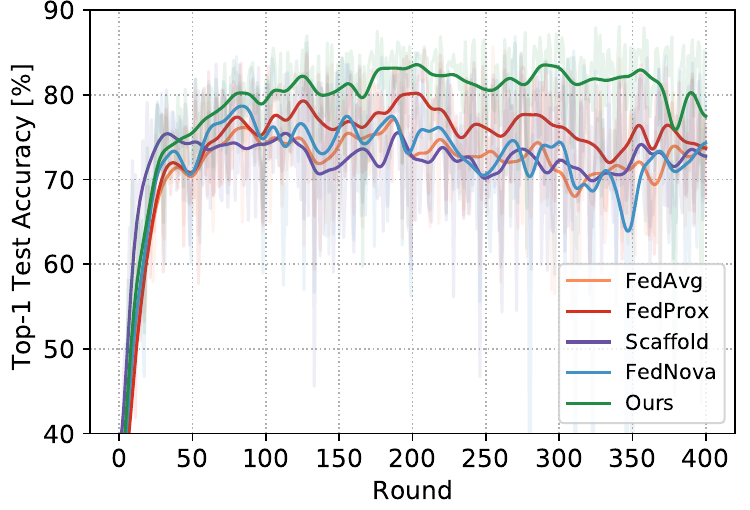}}
     \subfigure[$a=0.1$, $K=100$, $E=1$ ]{\includegraphics[width=0.24\textwidth]{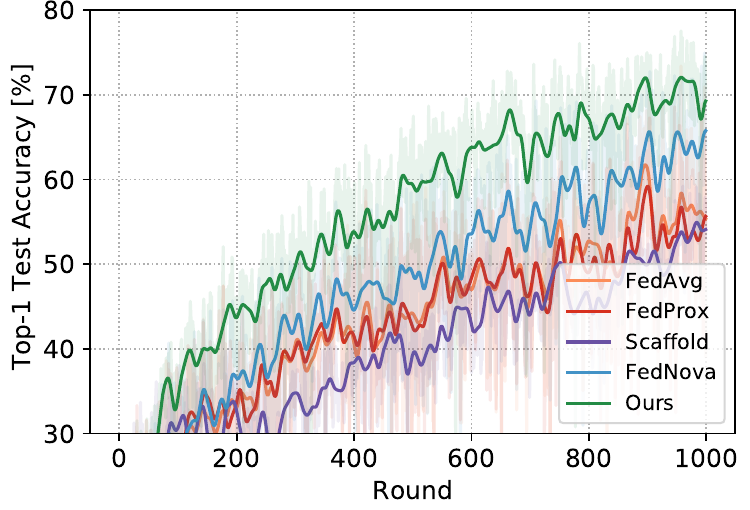}}
     \subfigure[$a=0.05$, $K=10$, $E=1$ ]{\includegraphics[width=0.24\textwidth]{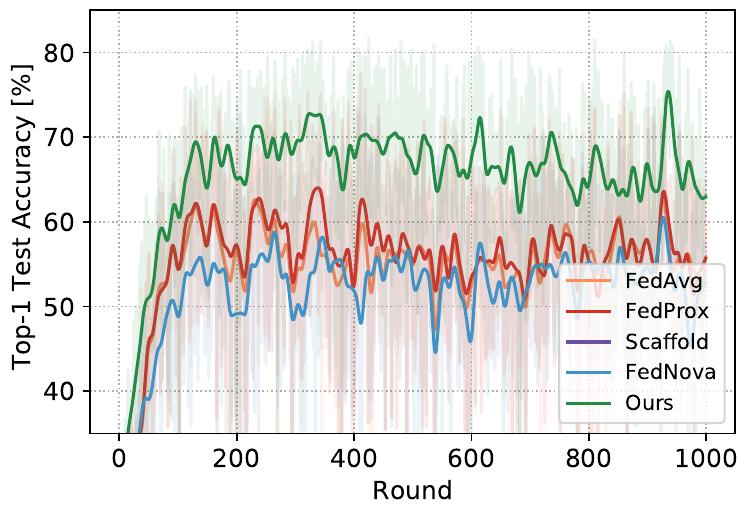}}
    \caption{Convergence comparison of CIFAR-10.}
    \label{fig:Convergence-cifar10}
\vspace{-0.3cm}
\end{figure*}

\begin{figure*}[h!]
    \subfigbottomskip=-1pt
    \subfigcapskip=1pt
  \centering
     \subfigure[$a=0.1$, $K=10$, $E=1$ ]{\includegraphics[width=0.24\textwidth]{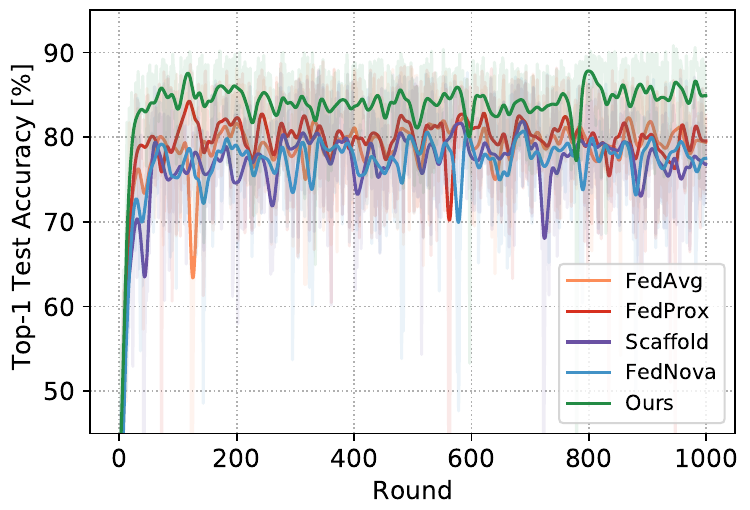}}
     \subfigure[$a=0.1$, $K=10$, $E=5$ ]{\includegraphics[width=0.24\textwidth]{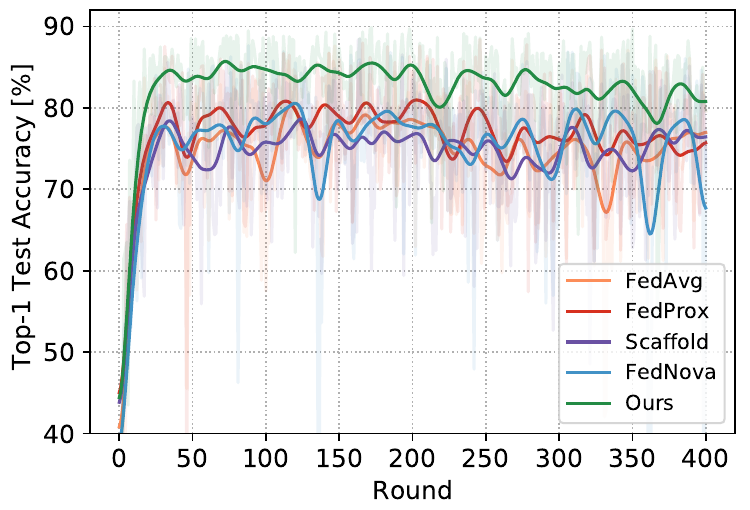}}
     \subfigure[$a=0.1$, $K=100$, $E=1$ ]{\includegraphics[width=0.24\textwidth]{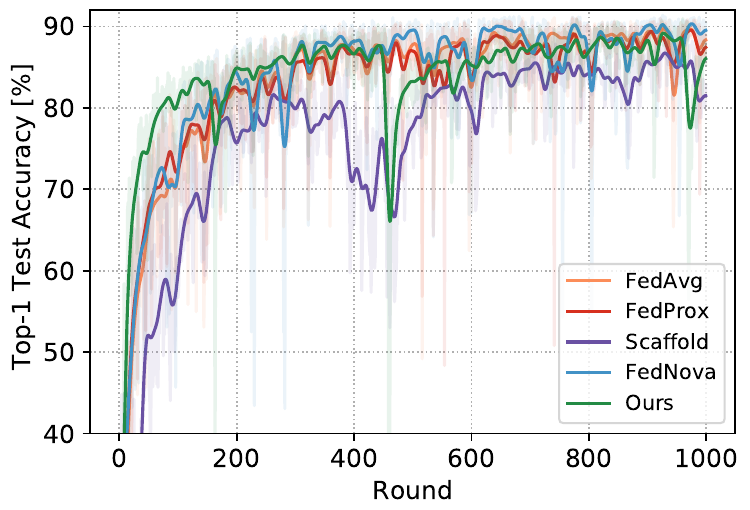}}
     \subfigure[$a=0.05$, $K=10$, $E=1$ ]{\includegraphics[width=0.24\textwidth]{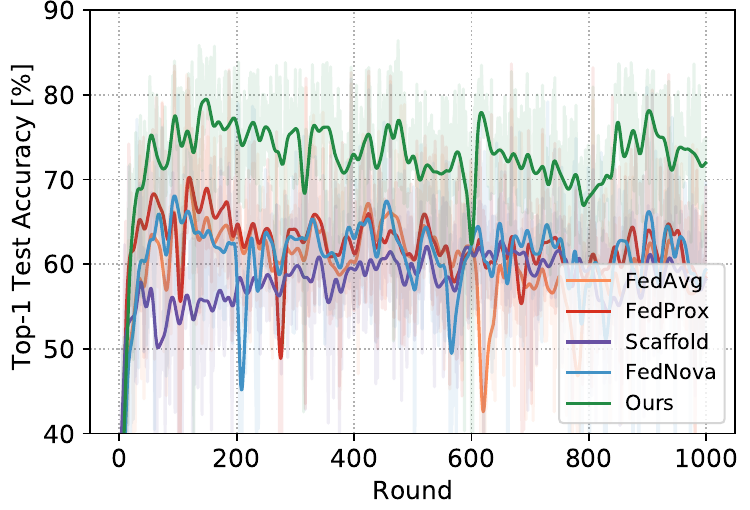}}
    \caption{Convergence comparison of FMNIST.}
    \label{fig:Convergence-fmnist}
\vspace{-0.3cm}
\end{figure*}

\begin{figure*}[h!]
    \subfigbottomskip=-1pt
    \subfigcapskip=1pt
  \centering
     \subfigure[$a=0.1$, $K=10$, $E=1$ ]{\includegraphics[width=0.24\textwidth]{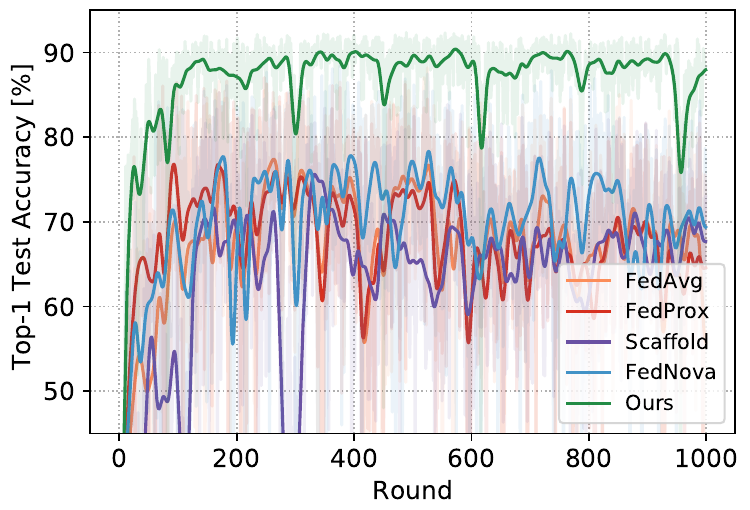}}
     \subfigure[$a=0.1$, $K=10$, $E=5$ ]{\includegraphics[width=0.24\textwidth]{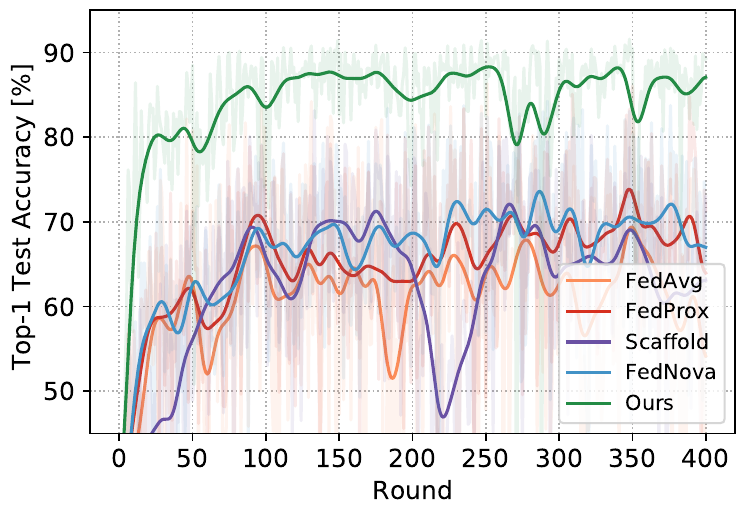}}
     \subfigure[$a=0.1$, $K=100$, $E=1$ ]{\includegraphics[width=0.24\textwidth]{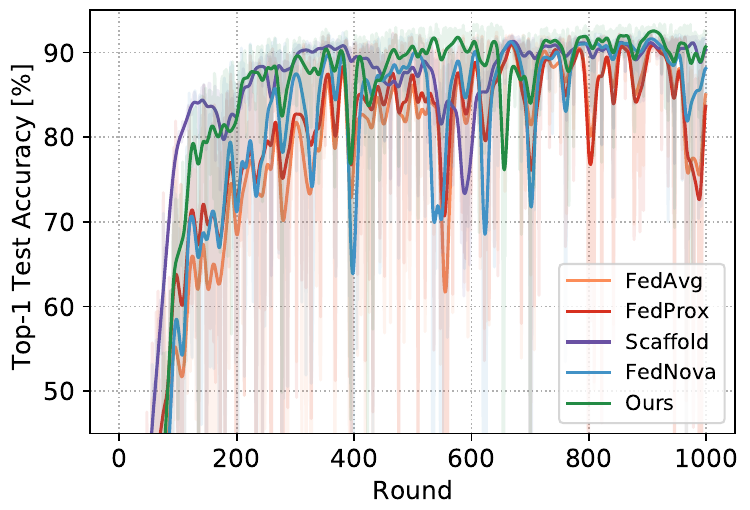}}
     \subfigure[$a=0.05$, $K=10$, $E=1$ ]{\includegraphics[width=0.24\textwidth]{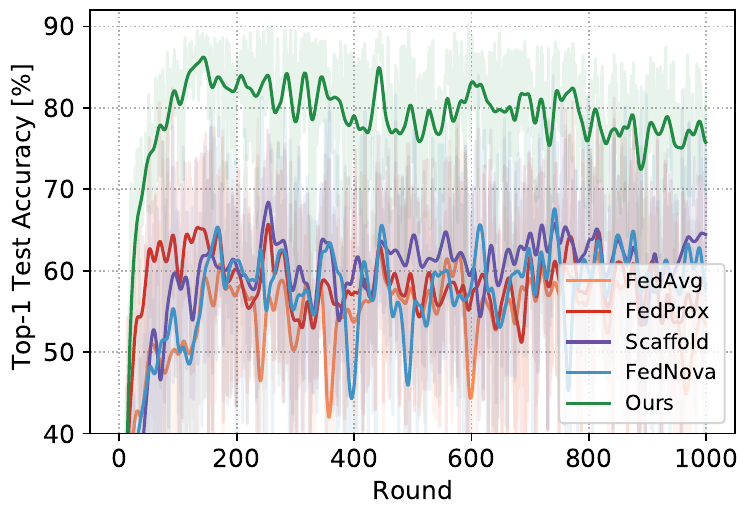}}
    \caption{Convergence comparison of SVHN.}
    \label{fig:Convergence-SVHN}
\vspace{-0.3cm}
\end{figure*}

\begin{figure*}[h!]
    \subfigbottomskip=-1pt
    \subfigcapskip=1pt
  \centering
     \subfigure[$a=0.1$, $K=10$, $E=1$ ]{\includegraphics[width=0.24\textwidth]{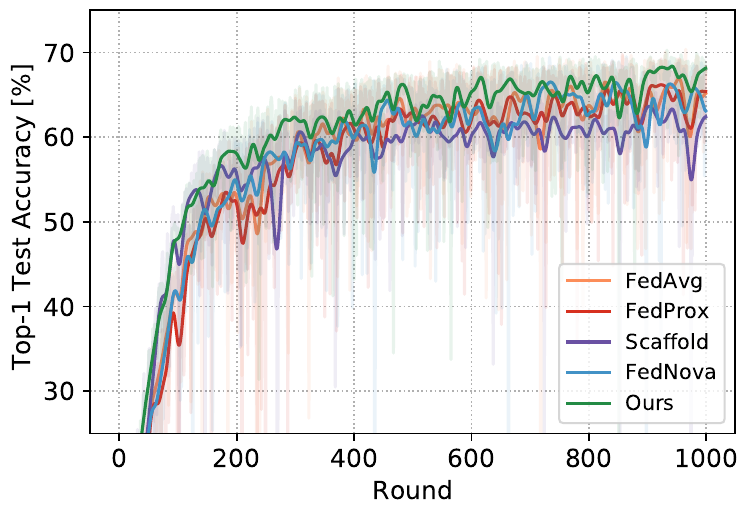}}
     \subfigure[$a=0.1$, $K=10$, $E=5$ ]{\includegraphics[width=0.24\textwidth]{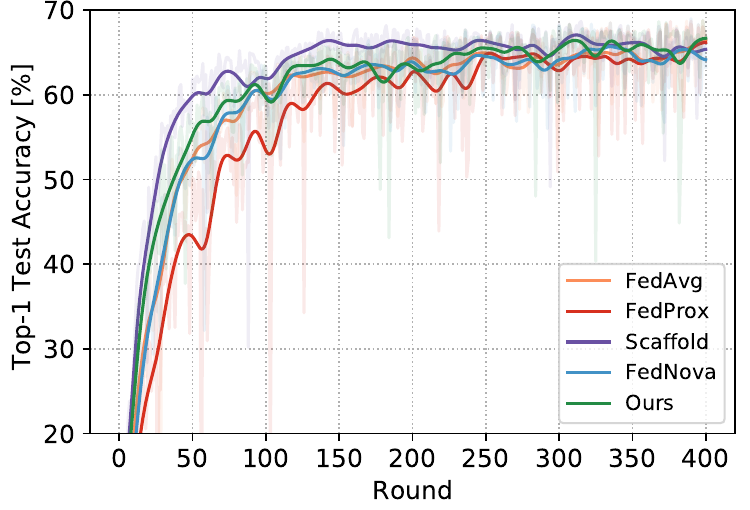}}
     \subfigure[$a=0.1$, $K=100$, $E=1$ ]{\includegraphics[width=0.24\textwidth]{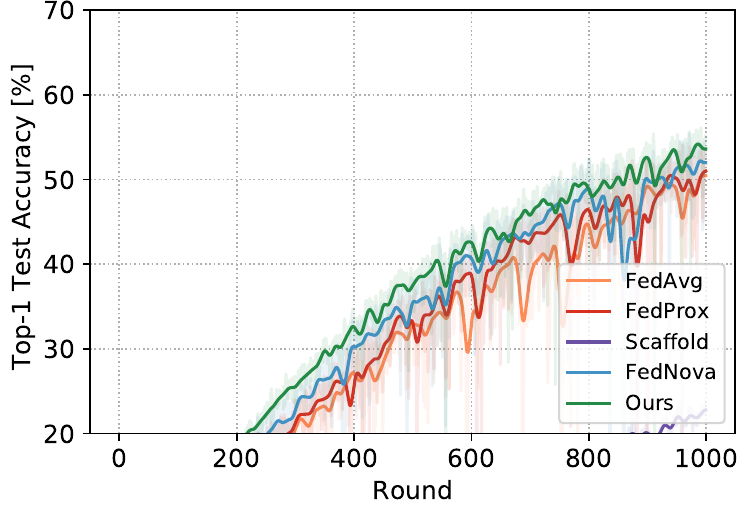}}
     \subfigure[$a=0.05$, $K=10$, $E=1$ ]{\includegraphics[width=0.24\textwidth]{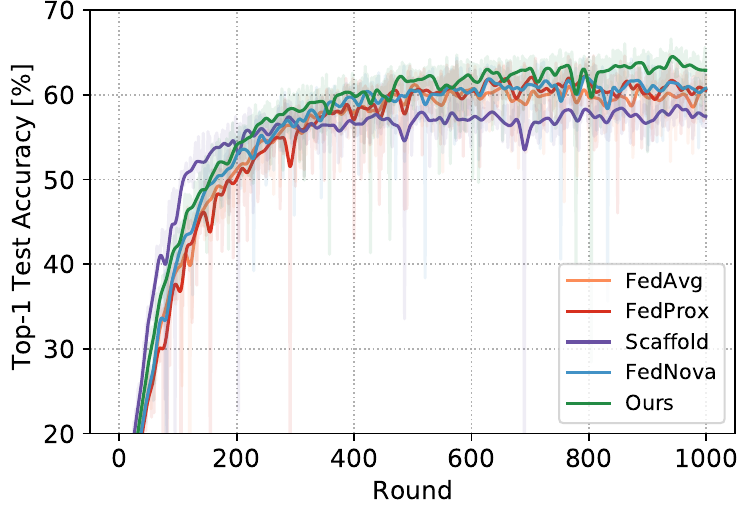}}
    \caption{Convergence comparison of CIFAR100.}
    \label{fig:Convergence-CIFAR100}
\vspace{-0.3cm}
\end{figure*}

\section{Additional Experiments}\label{appendix:AdditionalExp}
\vspace{-5pt}

\begin{figure*}[h!]
    \subfigbottomskip=-1pt
    \subfigcapskip=1pt
  \centering
     \subfigure[Long Training Time ]{\includegraphics[width=0.33\textwidth]{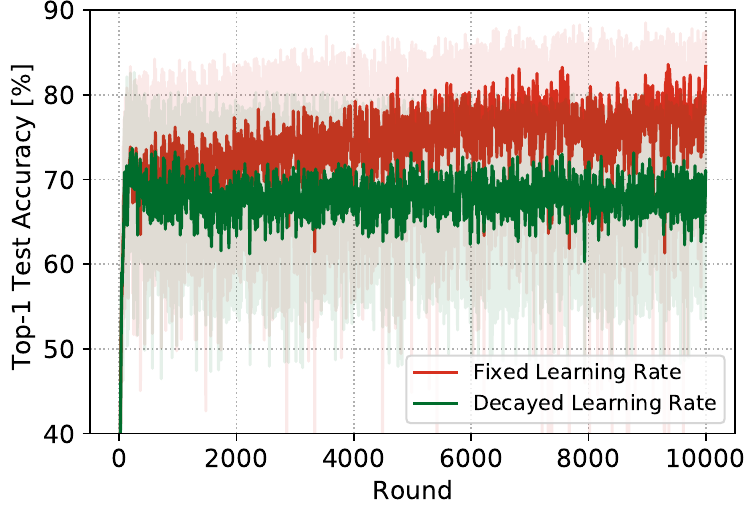}}
     \subfigure[Our method with different degrees of noise]{\includegraphics[width=0.33\textwidth]{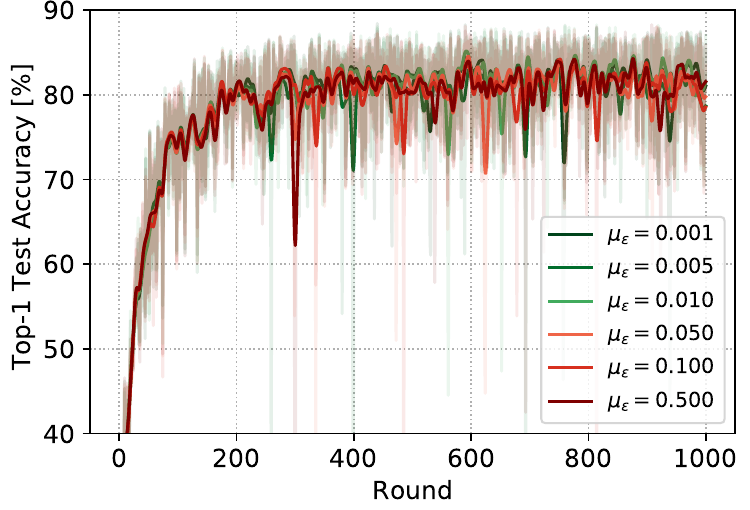}}
    \caption{Additional experiments on CIFAR-10 with $a=0.1$, $K=10$, $E=1$.}
    \label{fig:longRun_and_DP}
\vspace{-0.0cm}
\end{figure*}

\begin{figure*}[h!]
    \subfigbottomskip=-1pt
    \subfigcapskip=1pt
  \centering
     \subfigure[ResNet-18 with FMNIST ]{\includegraphics[width=0.32\textwidth]{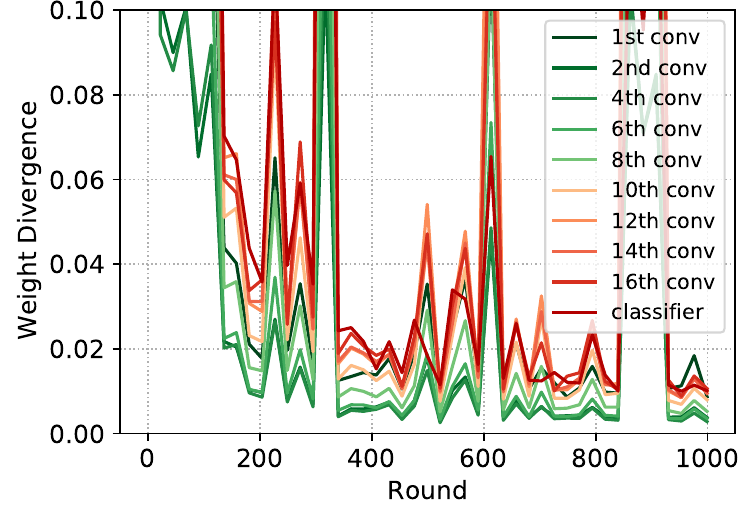}}
     \subfigure[ResNet-18 with SVHN]{\includegraphics[width=0.32\textwidth]{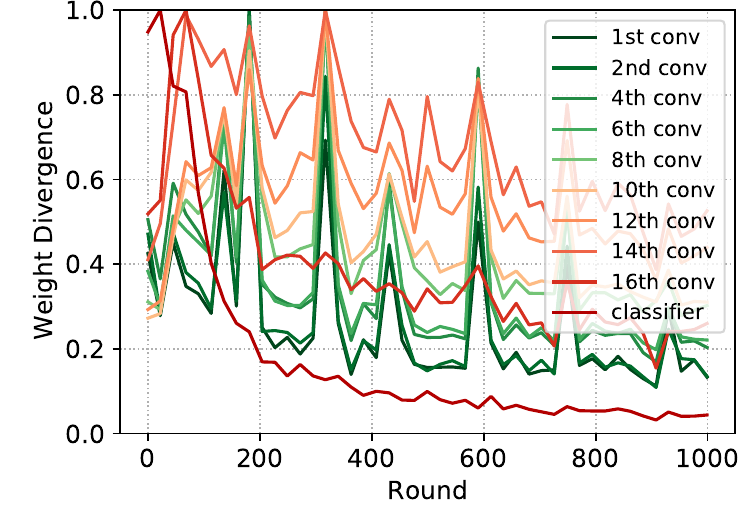}}
     \subfigure[ResNet-50 with CIFAR-100]{\includegraphics[width=0.32\textwidth]{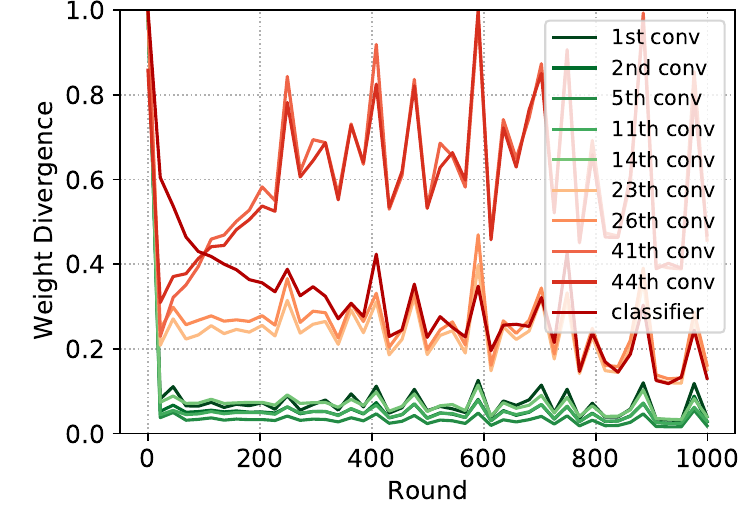}}
    \caption{Layer divergence of FedAvg.}
    \label{fig:more-layer-wise-divergence}
\vspace{-0.0cm}
\end{figure*}

\subsection{Training with Longer Time}
To demonstrate the difficulty of optimization of FedAvg in heterogeneous-data environment, we show the results of training 10000 rounds, as shown in Figure~\ref{fig:longRun_and_DP} (a). During this 10000 rounds, the highest test accuracy of FedAvg with fixed learning rate is 88.5\%, and it of the FedAvg with decayed learnign rate is 82.65\%. Note that we set the learning rate decay exponentially decay at each communication round, wich rate 0.997. Even after 2000 rounds, the learning rate becomes as the around 0.0026 times as the original learning rate. The results show that the longer training time cannot fill the generalization performance gap between FedAvg and centralized training, encouraging us to develop new optimization schemes to improve it.


\begin{table}[t!]
\vspace{-0.1cm}
\centering
\caption{Test Accuracy of our method with different degrees of noise.} 
\vspace{1pt}
\footnotesize{
\begin{tabular}{c|ccccccc}
\toprule[1.5pt]
$\mu_{\epsilon}$ & 0.0  & 0.001 & 0.005 & 0.01 & 0.05 & 0.1 & 0.5 \\
\midrule[1.5pt]
Test Accuracy (\%) & 88.45 &  88.43 & 88.23 & 88.26 & 88.07 &  88.11 & 88.3 \\
\bottomrule[1.5pt] 
\end{tabular}
}
\vspace{-0.1cm}
\label{tab:WithDP}
\end{table}

\begin{figure}[h!]
    \setlength{\abovedisplayskip}{-5pt}
    \subfigbottomskip=2pt
    \subfigcapskip=1pt
   \centering
    {\includegraphics[width=0.25\linewidth]{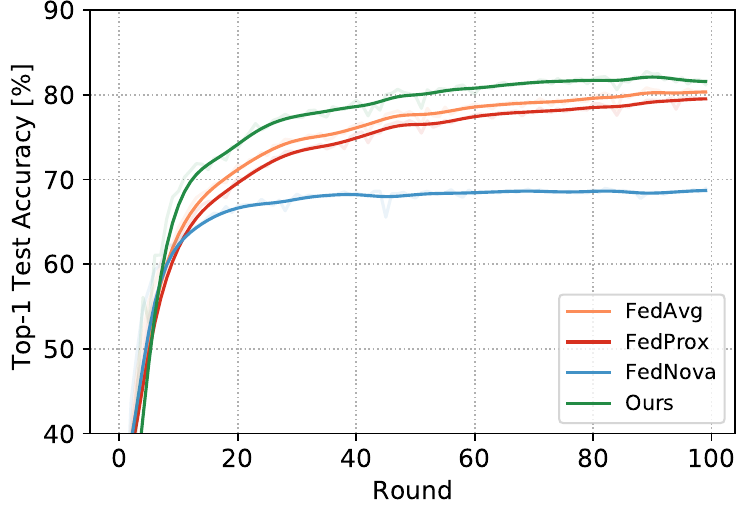}}
    \caption{Convergence comparison of FEMNIST with 3400 clients.}
    \label{fig:femnist}
\end{figure}

\begin{table}[t!]
\centering
\caption{Test Accuracy of different algorithms on FEMNIST.} 
\vspace{1pt}
\footnotesize{
\begin{tabular}{c|cccc}
\toprule[1.5pt]
 & FedAvg  & FedProx & FedNova & \textbf{FedImpro} \\
\midrule[1.5pt]
Test Accuracy (\%) & 80.83 & 79.70 &  68.96 & \textbf{82.77} \\
Comm. Round to attain the Target Acc. & 82 &  NaN & NaN & \textbf{45} \\
\bottomrule[1.5pt] 
\end{tabular}
}
\vspace{-0.1cm}
\label{tab:femnist}
\end{table}

\vspace{-5pt}
\subsection{Sharing Estimating Parameters with noises of different \vspace{-5pt}
degrees}\label{sec:MoreExpNoise}
To enhance the security of the sharing feature distribution, we add the noise $\epsilon \sim \mathcal{N}(0, \sigma{\epsilon} )$ on the $\sigma_{m}$ and $\mu_{m}$. The privacy degree could be enhanced by the larger $\mu_{\epsilon}$. We show the results of our method with different $\sigma_{\epsilon}$ in Figure~\ref{fig:longRun_and_DP} (b) and Table~\ref{tab:WithDP}. The results show that under the high perturbation of the estimated parameters, our method attains both high privacy and generalization gains.

\vspace{-5pt}
\subsection{More Experiments of the Real-world Datasets}\label{sec:MoreExpDataset}
\vspace{-5pt}
To verify the effect of our methods on the real-world FL datasets, we conduct experiments with Federated EMNIST(FEMNIST)~\citep{caldas2018leaf,chaoyanghe2020fedml}, which has 3400 users, 671585 training samples and 77483 testing samples. We sample 20 clients per round, and conduct local training with 10 epochs. We search the learning rate for algorithms in $\left \{0.01, 0.05, 0.1 \right \}$ and find the 0.05 is the best for all algorithm. Figure~\ref{fig:femnist} and Table~\ref{tab:femnist} show that our method converges faster and attains better generalization performance than other methods. Note that the SCAFFOLD is not included the experiments, as it has a very high requirement (storing the control variates) of simulating 3400 clients with few machines.

\textbf{More Results of the Layer-wise Divergence.} We conduct more experiments of the layer divergence of FedAvg with different datasets including FMNIST, SVHN and CIFAR-100, training with ResNet-18 and ResNet-50. As Figure~\ref{fig:LayerDivergence} and~\ref{fig:more-layer-wise-divergence} shows, the divergence of the low-level model divergence shrinks faster than the high-level. Thus, reducing the high-level gradient dissimilarity is more crucial than the low-level.

\subsection{More Results of Different Model architecture}\label{sec:MoreModel} 
To verify the effect of our method on different model architectures, we conduct additional experiments of training VGG-9 on CIFAR-10, instead of the ResNet architectures. The experimental results are shown in Table~\ref{tab:VGG-9}. The target accuracy is 82\%. We can see that our method can outperform baseline methods with different architectures.

\begin{table}[h]
\centering
\caption{Test Accuracy of different algorithms with VGG-9 on CIFAR-10.}
\resizebox{\linewidth}{!}{
\begin{tabular}{c|ccccc}
\toprule[1.5pt]
& FedAvg & FedProx & SCAFFOLD & FedNova & \textbf{FedImpro}   \\
\midrule[1.5pt]
Test Accuracy (\%)                & 82.58  & 82.92   & 82.43    & 82.91   & \textbf{84.52} \\
\midrule[1.5pt]
Comm. Round to attain the Target Acc. & 836    & 844     & 512      & 664     & \textbf{426}   \\
\bottomrule[1.5pt] 
\end{tabular}
}
\label{tab:VGG-9}
\end{table}

\vspace{-5pt}
\subsection{More baselines}\label{sec:MIAattacks}
\vspace{-5pt}

We further compare our method with FedDyn~\citep{acar2021federated}, CCVR~\citep{luo2021no}, FedSpeed~\citep{sun2023fedspeed} and FedDF~\citep{FedDF}. The results are given in Table~\ref{tab:morebaseline} as below. As original experiments in these works do not utilize the same FL setting, we report their test accuracy of CIFAR-10/100 dataset and the according settings in original papers. To make the comparison fair, we choose the same or the harder settings of their methods and report their original results. The larger dirichlet parameter $\alpha$ means more severe data heterogeneity. The higher communication round means training with longer time. We can see that our method can outperform these methods with the same setting or even the more difficult settings for us. 

\begin{table}[ht]
\caption{Comparisons with more baselines.}
\centering
\resizebox{\linewidth}{!}{
\begin{tabular}{c|ccccc}
\toprule[1.5pt]
\textbf{Method} & \textbf{Dataset} & \textbf{Dirichlet $\alpha$} & \textbf{Client Number} & \textbf{Communication Round} & \textbf{Test ACC.} \\ 
\hline
FedDyn & \multirow{4}{*}{CIFAR-10} & 0.3 & 100 & 1400 & 77.33 \\
FedSpeed &  & 0.3 & 100 & 1400 & 82.68 \\
FedImpro &    & 0.3 & 100 & 1400 & \textbf{85.14} \\
FedImpro &    & 0.1 & 100 & 1400 & \textbf{82.76} \\
\midrule[1.5pt]
CCVR & \multirow{2}{*}{CIFAR-10} & 0.1 & 10 & 100 & 62.68 \\
FedDF (no extra data) &  & 0.1 & 10 & 100 & 38.6 \\
FedImpro &    & 0.1 & 10 & 100 & \textbf{68.29} \\
\midrule[1.5pt]
SphereFed & \multirow{2}{*}{CIFAR-100} & 0.1 & 10 & 100 & 69.19 \\
FedImpro &    & 0.1 & 10 & 100 & \textbf{70.28} \\
\bottomrule[1.5pt]
\end{tabular}
}
\label{tab:morebaseline}
\end{table}

\vspace{-5pt}
\subsection{More Ablation study on number of selected clients}\label{sec:MIAattacks}
\vspace{-5pt}
The feature distribution depends on the client selected in each round. Thus, to analyze the number of clients on the feature distribution estimation, we conduct ablation study with training CIFAR-10 datasets with varying the number of selected clients. Table~\ref{tab:DiffNumSelectClients} shows the effect of varying the number of selected clients per round. More clients can improve both FedAvg and FedImpro. The FedImpro can work well with more selected clients.

\begin{table}[ht]
\centering
\begin{tabular}{cccc|cc}
\toprule[1.5pt]
$a$ & E & $M$ & $M_c$ & FedAvg & FedImpro \\
\midrule[1.5pt]
0.1 & 1 & 10 & 5 & 83.65$\pm$2.03 & \textbf{88.45$\pm$0.43} \\
0.1 & 1 & 10 & 10 & 86.65$\pm$1.26 & \textbf{90.25$\pm$0.93} \\
0.05 & 1 & 10 & 5 & 75.36$\pm$1.92 & \textbf{81.75$\pm$1.03} \\
0.05 & 1 & 10 & 10 & 78.36$\pm$1.58 & \textbf{85.75$\pm$1.21} \\
0.1 & 5 & 10 & 5 & 85.69$\pm$0.57 & \textbf{88.10$\pm$0.20} \\
0.1 & 5 & 10 & 10 & 87.92$\pm$0.31 & \textbf{90.92$\pm$0.25} \\
0.1 & 1 & 100 & 10 & 73.42$\pm$1.19 & \textbf{77.56$\pm$1.02} \\
0.1 & 1 & 100 & 20 & 77.82$\pm$0.94 & \textbf{85.39$\pm$1.15} \\
\bottomrule[1.5pt]
\end{tabular}
\caption{Ablation study with training on CIFAR-10 on different number of selected clients.}
\label{tab:DiffNumSelectClients}
\end{table}

\vspace{-5pt}
\subsection{Reconstructed Raw Data by Model Inversion Attacks}\label{sec:MIAattacks}
\vspace{-5pt}
We utilize the model inversion method ~\citep{zhao2021what,zhou2023mcgra} used in ~\citep{luo2021no} to verify the privacy protection of our methods. The original private images are shown in Figure~\ref{fig:RealSingleIMGs}, reconstructed images using the features of each image are shown in  Figure~\ref{fig:ReconSingleIMGs}, reconstructed images using the mean of features of all images are shown in  Figure~\ref{fig:ReconMeanIMGs}. Based on the results, we observe that feature inversion attacks struggle to reconstruct private data using the shared feature distribution parameters $\sigma_m$ and $\mu_m$. Thus, our method is robust against model inversion attacks.

\begin{figure*}[h!]
    \setlength{\abovedisplayskip}{-2pt}
    \subfigbottomskip=-1pt
    \subfigcapskip=1pt
    \setlength{\abovecaptionskip}{-2pt}
  \centering
     \subfigure{\includegraphics[width=0.19\textwidth]{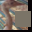}}
     \subfigure{\includegraphics[width=0.19\textwidth]{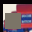}}
     \subfigure{\includegraphics[width=0.19\textwidth]{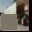}}
     \subfigure{\includegraphics[width=0.19\textwidth]{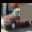}}
     \subfigure{\includegraphics[width=0.19\textwidth]{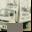}}
    \caption{Original private images that are fed into the model.}
    \label{fig:RealSingleIMGs}
\vspace{-0.0cm}
\end{figure*}

\begin{figure*}[h!]
    \setlength{\abovedisplayskip}{-2pt}
    \subfigbottomskip=-1pt
    \subfigcapskip=1pt
    \setlength{\abovecaptionskip}{-2pt}
  \centering
     \subfigure{\includegraphics[width=0.19\textwidth]{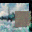}}
     \subfigure{\includegraphics[width=0.19\textwidth]{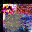}}
     \subfigure{\includegraphics[width=0.19\textwidth]{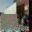}}
     \subfigure{\includegraphics[width=0.19\textwidth]{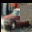}}
     \subfigure{\includegraphics[width=0.19\textwidth]{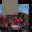}}
    \caption{Reconstructed images using the raw features. Note that each image is reconstructed by the feature of each private image, i.e. sharing the raw features. It shows that the feature inversion method can successfully reconstruct the original image based on the raw features.}
    \label{fig:ReconSingleIMGs}
\vspace{-0.0cm}
\end{figure*}

\begin{figure*}[h!]
  \centering
     \subfigure[No Noise]{\includegraphics[width=0.19\textwidth]{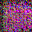}}
     \subfigure[$\mu_{\epsilon}=0.001$]{\includegraphics[width=0.19\textwidth]{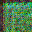}}
     \subfigure[$\mu_{\epsilon}=0.01$]{\includegraphics[width=0.19\textwidth]{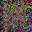}}
     \subfigure[$\mu_{\epsilon}=0.1$]{\includegraphics[width=0.19\textwidth]{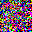}}
     \subfigure[$\mu_{\epsilon}=0.5$]{\includegraphics[width=0.19\textwidth]{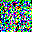}}
    \caption{Reconstructed images using the mean of features (\textbf{our methods}) of all images with noises of different degrees. $\mu_{\epsilon}$ is the variance of the Gaussian noise. Now, feature inversion method cannot reconstruct the original images..}
    \label{fig:ReconMeanIMGs}
\vspace{-0.0cm}
\end{figure*}

\end{document}